\documentclass[journal]{IEEEtran}
\usepackage{amsmath,amsfonts}
\usepackage{algorithmic}
\usepackage{array}
\usepackage[caption=false,font=normalsize,labelfont=sf,textfont=sf]{subfig}
\usepackage{textcomp}
\usepackage{stfloats}
\usepackage{url}
\usepackage{verbatim}
\usepackage{graphicx}
\def\BibTeX{{\rm B\kern-.05em{\sc i\kern-.025em b}\kern-.08em
    T\kern-.1667em\lower.7ex\hbox{E}\kern-.125emX}}
\usepackage{balance}

\makeatletter
\providecommand{\@LN}[2]{}
\makeatother

\usepackage{amsthm}

\usepackage[dvipsnames]{xcolor}
\usepackage{wrapfig}
\usepackage{url} 
\usepackage{tikz}
\usepackage{float}
\usepackage{xcolor}
\usetikzlibrary{fit}

\usetikzlibrary{shapes,arrows}
\usetikzlibrary{positioning}
\usetikzlibrary{decorations.pathreplacing}
\usetikzlibrary{shapes.geometric}
\usepackage{hhline}
\usetikzlibrary{chains, decorations.markings, shadows, shapes.arrows, shapes, fit}
\usepackage{circuitikz}
\usepackage{multirow}
\usepackage{overpic}
\usetikzlibrary{positioning, fit}

\tikzstyle{block} = [draw, rectangle]   
\tikzstyle{input} = [coordinate]
\tikzstyle{output} = [coordinate]

\usepackage{mathtools,leftidx}
\usepackage{mathtools,leftindex,tensor}
\usepackage[version=4]{mhchem} 
\usepackage{enumitem}

\definecolor{White}{rgb}{1,1,1}
\definecolor{WildStrawberry}{rgb}{1.0, 0.24, 0.40}
\definecolor{VioletRed}{rgb}{0.82, 0.13, 0.36}
\definecolor{SpringGreen}{rgb}{0.0, 1.0, 0.5}

\definecolor{mygreen}{RGB}{0, 100, 0}

\newtheorem{thm}{Theorem}

\newtheorem{remark}{\textbf{Remark}}

\usepackage{comment}

\begin{document}

\title{Scene Representation using $\mathbf{360^{\circ}}$ Saliency Graph and its Application in Vision-based Indoor Navigation}

\author{Preeti Meena, ~\IEEEmembership{Graduate Student Member,~IEEE,}
Himanshu Kumar, ~\IEEEmembership{Member,~IEEE,}
Sandeep Yadav, ~\IEEEmembership{Member,~IEEE,} 
\thanks{ Preeti Meena, Himanshu Kumar, and Sandeep Yadav are with Discipline of Electrical Engineering, Indian Institute of Technology Jodhpur, Jodhpur-342030, India {\tt\small meena.52@iitj.ac.in}, {\tt\small hkumar@iitj.ac.in}, {\tt\small sy@iitj.ac.in}}}

\markboth{Journal of \LaTeX\ Class Files,~Vol.~14, No.~8, August~2021}%
{Shell \MakeLowercase{\textit{et al.}}: A Sample Article Using IEEEtran.cls for IEEE Journals}


\maketitle


\begin{abstract}

A Scene, represented visually using different formats such as RGB-D, LiDAR scan, keypoints, rectangular, spherical, multi-views, etc., contains information implicitly embedded relevant to applications such as scene indexing, vision-based navigation. Thus, these representations may not be efficient for such applications. This paper proposes a novel $360^{\circ}$ saliency graph representation of the scenes. This rich representation explicitly encodes the relevant visual, contextual, semantic, and geometric information of the scene as nodes, edges, edge weights, and angular position in the $360^{\circ}$ graph. Also, this representation is robust against scene view change and addresses challenges of indoor environments such as varied illumination, occlusions, and shadows as in the case of existing traditional methods. We have utilized this rich and efficient representation for vision-based navigation and compared it with existing navigation methods using $360^{\circ}$ scenes. However, these existing methods suffer from limitations of poor scene representation, lacking scene-specific information. This work utilizes the proposed representation first to localize the query scene in the given topological map, and then facilitate $2$D navigation by estimating the next required movement directions towards the target destination in the topological map by using the embedded geometric information in the $360^{\circ}$ saliency graph. Experimental results demonstrate the efficacy of the proposed $360^{\circ}$ saliency graph representation in enhancing both scene localization and vision-based indoor navigation.

\end{abstract} 
\begin{IEEEkeywords}
 $360^{\circ}$ Summary, Saliency Graph, Scene Representation, Localization, Vision-based Navigation, Scene Map.
\end{IEEEkeywords}

\section{Introduction}
\label{sec:intro}
\IEEEPARstart{I}{nterpreting} and representing visual scenes in a compact, meaningful, and computationally efficient manner is a key requirement for a wide range of vision-based applications, including scene retrieval, localization, and navigation \cite{meena2024indoor}. Indoor environment comprises scenes with varied illumination, complex layouts, occlusions, high visual similarity between different locations, and shadows \cite{meena2024indoor}. These challenges make indoor scene representation a widely explored and still open research problem. Earlier methods have relied on RGB images, RGB-D data, LiDAR point clouds, keypoint-based descriptors, and multi-view images as scene representations \cite{qin2021semantic}. RGB images are generally encoded through visual features extracted from them, but such features are highly sensitive to varying indoor conditions, like varied illumination and presence of shadows. LiDAR point clouds effectively capture the geometric structure of the scene, but tend to be sparse and lack semantic understanding. Multi-view images are introduced to address the limited field of view or partial scene information captured by a single image \cite{meena2024indoor}. However, multi-view-based representation requires additional camera parameters to infer spatial relationships, which increases computational complexity. These limitations highlight that scene representations relying on visual and structural layout information are prone to errors and can lead to inaccurate results in vision-based applications such as indoor navigation. To mitigate these issues, several methods \cite{labrosse2007short, yu2011image} have employed $360^{\circ}$ visual data via panorama images, which provide a complete and continuous view of scene by capturing richer scene context from a single observation point.
The queryable semantic topological maps \cite{mehan2024questmaps} comprise key locations, and their connections can facilitate navigation within indoor environments. Existing navigation methods that use topological maps \cite{an2024etpnav} rely on panorama images to represent the scene at each key location. This results in suboptimal scene representation because it treats all components within the image as equally important, neglecting the uniqueness of different components in navigation tasks. Instead of using the entire panorama image, several methods \cite{kwon2024wayil, raj2016appearance, chen2019behavioral, chaplot2020neural} employ visual features extracted from the panorama image as a representation for navigation. However, visual features \cite{raj2016appearance, chen2019behavioral, chaplot2020neural} are unreliable in indoor environments due to varying illumination and shadows. 
Thus, a concise and navigation-specific scene representation of the panorama image content, such as a `summary' \cite{zhang2014panocontext, niu2019hand, du2020learning} is required.


\begin{figure}[!t]
\setlength{\fboxsep}{0.9pt}     
\tikzstyle{block} = [draw, rectangle, dashed, draw=black!40, thick, inner sep=0pt, outer sep=0pt]  
\tikzstyle{dash_blk} = [draw, rectangle, dashed, draw=black!40, thick, inner sep=0pt, outer sep=0pt] 
\tikzstyle{block1} = [draw, thick, minimum height=3em, minimum width=6em]
\tikzstyle{input} = [coordinate]
\tikzstyle{output} = [coordinate]
\tikzstyle{dotted_block}=[draw, rectangle, line width=1pt, dash pattern=on 1.5pt off 3pt on 6.5pt off 3pt, rounded corners]
\tikzstyle{obnode}=[draw,circle,inner sep=1.4pt]
\tikzstyle{obbnode}=[draw,circle,inner sep=0.5pt]
\tikzstyle{dinode}=[diamond,draw,inner sep=1.2pt]

\centering
\begin{tikzpicture}

\node [input, name=input] {};  

\node [above right=-1.25cm and -3.9cm  of input] (ip1) {\colorbox{cyan!60}{\includegraphics[width=.21\textwidth, height=.09\textwidth]{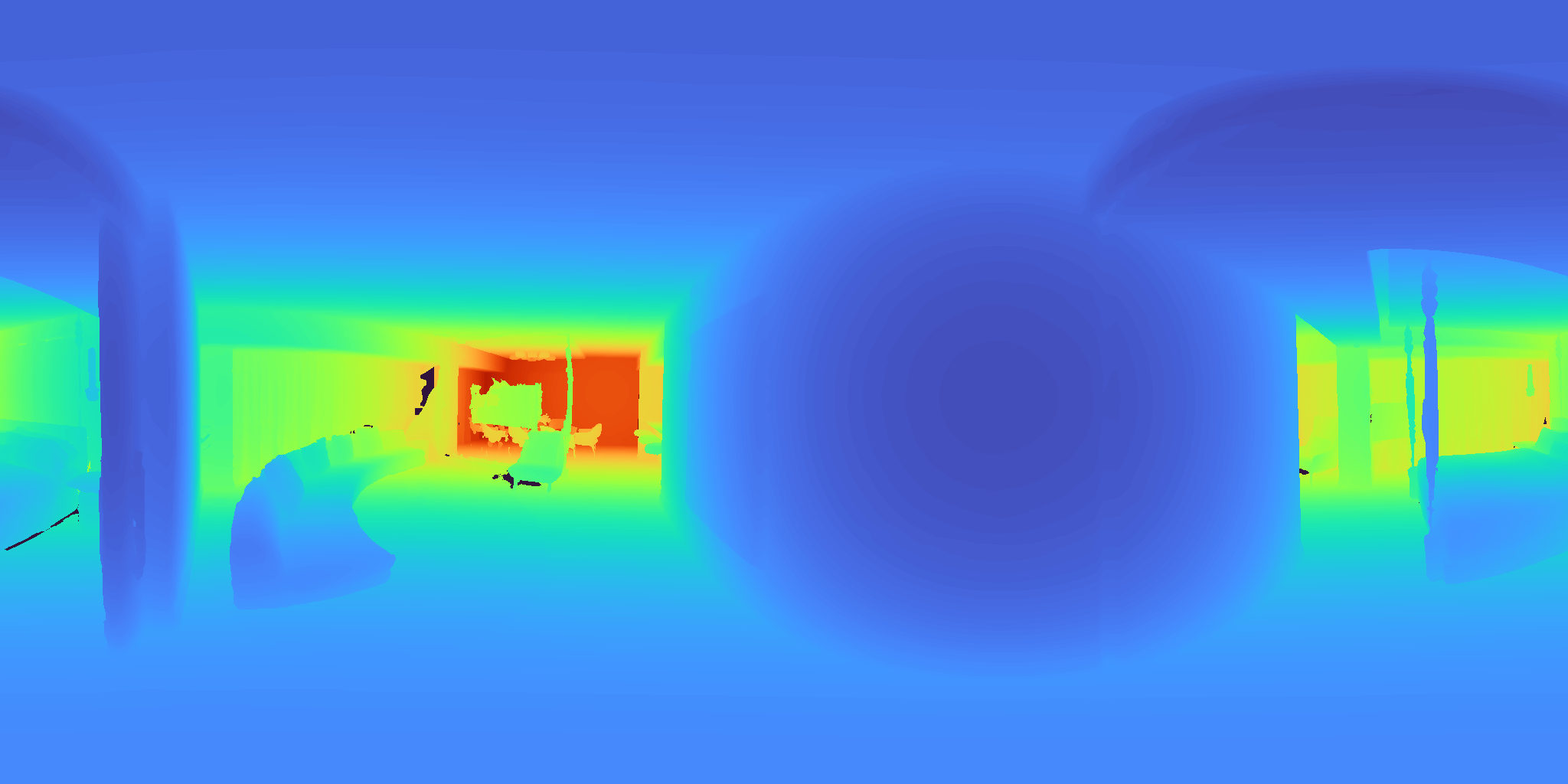}}};
\node [above left = -2.2cm and -3.4cm of ip1] (ip2) {\colorbox{cyan!60}{\includegraphics[width=.21\textwidth, height=.09\textwidth]{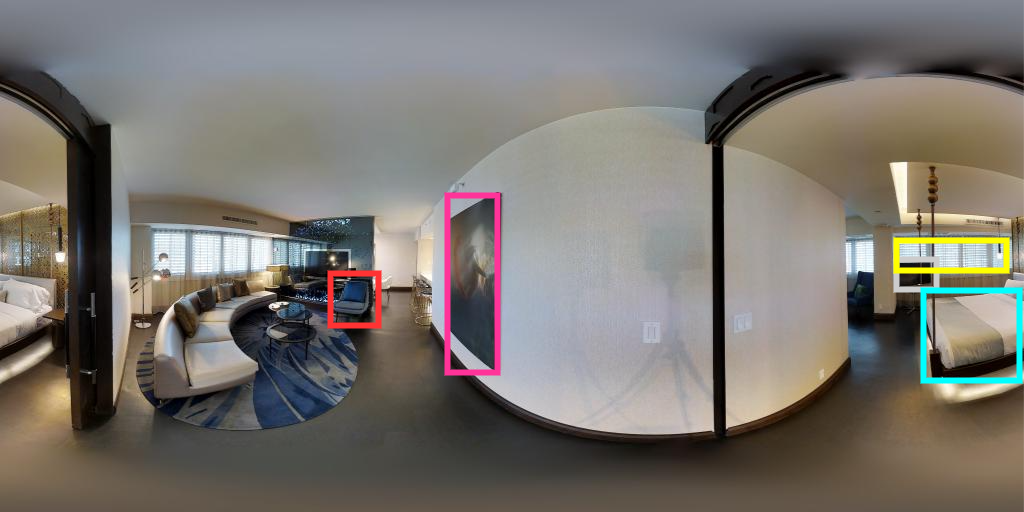}}};

\begin{scope}[shift={(ip1.center)}, scale=2.5] 
    \draw[->, thick, green] (-0.13, -0.35) -- (-0.5, -0.5) node[anchor=south west] {\tiny{X}};
    \draw[->, thick, blue] (-0.13, -0.35) -- (-0.13, 0.5) node[anchor=south east] {\tiny{Z}}; 
    \draw[->, thick, red] (-0.13, -0.35) -- (0.3, -0.48) node[anchor=south east] {\tiny{Y}};
\draw[thick, brown, line width=1.5mm] (-0.13-0.02, -0.35) -- (-0.13, -0.35-0.02) -- (-0.13+0.02, -0.35) -- (-0.13, -0.35+0.02) -- cycle;
\node at (-0.13, -0.31) [anchor=west] {\small{c}}; 
\end{scope}
\node [below left= 0.3cm and -3.6cm of ip1, align=center] (ipl1) {\tiny{(a) Panoramic RGB-D image with detected objects}};

\node [right of= ipl1,  node distance=3.8cm, align=center] (ipl2) {\tiny{(b) Scene representation via saliency graph}};

\node [below left= 2.95cm and -3.4cm of ipl1, align=center] (ipl3) {\tiny{(c) Scene representation via $360^{\circ}$ saliency graph}};

\node [below left = -1.6cm and -4.8cm of ip1, align=center] (pp1a) {};
\node [below left = -1.6cm and -4.1cm of ip1, align=center] (pp1) {}; 
\node [obbnode,fill= Magenta, below left = -0.4cm and -1.4cm of pp1, align=center] (pp9) {\scalebox{0.5}{$\mathcal{O}_{9}$}}; 
\node [obbnode,fill=WildStrawberry, below right =0.5cm and 0.1cm of pp1a, align=center] (pp3) {\scalebox{0.5}{$\mathcal{O}_{3}$}}; 
\node [below left = 0.5cm and 0.2cm of pp3, align=center] (pp2) {}; 
\node [above of= pp2, node distance=1.1cm, align=center] (pp8) {};

\node [obbnode,fill=Yellow, right of=pp8, node distance=1.2cm, align=center] (pp4) {\scalebox{0.5}{$\mathcal{O}_{4}$}}; 
\node [obbnode,fill= black!10, right of= pp2, node distance=1.6cm, align=center] (pp5) {\scalebox{0.5}{$\mathcal{O}_{5}$}}; 
\node [obbnode,fill=cyan, below right = 0.3cm and 0.5cm of pp4, align=center] (pp6) {\scalebox{0.5}{$\mathcal{O}_{6}$}}; 

\draw [draw,-] (pp4) -- node[sloped, below, color=red, yshift=0.5mm] {} (pp5);
\draw [draw,-] (pp4) -- node[sloped, above, color=red, yshift=-0.5mm] {} (pp6);
\draw [draw,-] (pp5) -- node[sloped, below, color=red, yshift=0.5mm] {} (pp6);

\node [dash_blk, draw=black!30, inner sep=2.0pt, xshift=0.4pt, yshift=-0.2pt, fit = (pp1a) (pp6) (pp4) (pp5) (pp9)] (gsra) {};


\node [below right = 1.0cm and 0.9cm of ip1, align=center] (sp) {\includegraphics[width=0.12\textwidth]{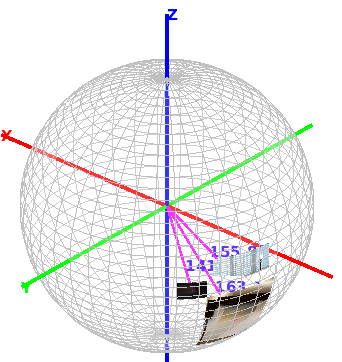}};

 \node [below right = 0.28cm and 0.7cm of pp2, align=center] (arr) {};
 \node [below of= arr,  node distance=1.0cm, align=center] (arr1) {};

\draw [draw, ->, line width=2.8pt, >=latex, color=black!90] (arr) -- 
    node[above, align=center, rotate=90] {\tiny{Project to}} 
    node[below, align=center, rotate=90] {\tiny{Sphere}} 
    (arr1);

\begin{scope}[shift={(sp.center))}, scale=1] 
   \draw[thick, brown, line width=1.5mm] (-0.16+0.1, -0.11) -- (-0.15+0.1, -0.12) -- (-0.14+0.1, -0.11) -- (-0.15+0.1, -0.10) -- cycle; 

    \node at (-0.06, -0.13) [anchor=north] {\small{c}}; 
\end{scope}

 \node [left of = sp, node distance=1.0cm, align=center] (srr) {};
 \node [left of= srr,  node distance=0.7cm, align=center] (srr1) {};
\draw [draw, ->, line width=2.2pt, >=latex, color=black] (srr) --  node[above, align=center] {} (srr1);
\node [below left = -2.0cm and 5.9cm of sp, align=center] (p1a) {};
\node [below right = 1.5cm and -4.1cm of ip1, align=center] (p1) {}; 
\node [obbnode,fill= Magenta, below left = -0.4cm and -0.13cm of p1, align=center] (p9) {\scalebox{0.5}{$\mathcal{O}_{9}$}}; 
\node [obbnode,fill=WildStrawberry, below right =0.5cm and 0.2cm of p1a, align=center] (p3) {\scalebox{0.5}{$\mathcal{O}_{3}$}}; 
\node [below left = 0.5cm and 0.7cm of p3, align=center] (p2) {}; 
\node [above of= p2, node distance=1.1cm, align=center] (p8) {};

\node [obbnode,fill=Yellow, right of=p8, node distance=2.2cm, align=center] (p4) {\scalebox{0.5}{$\mathcal{O}_{4}$}}; 
\node [obbnode,fill= black!10, right of= p2, node distance=2.6cm, align=center] (p5) {\scalebox{0.5}{$\mathcal{O}_{5}$}}; 
\node [obbnode,fill=cyan, below right = 0.2cm and 1.0cm of p4, align=center] (p6) {\scalebox{0.5}{$\mathcal{O}_{6}$}}; 

\draw [draw,-] (p4) -- node[sloped, below, color=red, yshift=0.5mm] {\tiny{0.21}} (p5);
\draw [draw,-] (p4) -- node[sloped, above, color=red, yshift=-0.5mm] {\tiny{0.46}} (p6);
\draw [draw,-] (p5) -- node[sloped, below, color=red, yshift=0.5mm] {\tiny{0.76}} (p6);

\begin{scope}[shift={(-2.1, 0)}, scale=1] 
    \draw[thick, brown, line width=1.5mm] (-1.6, -3.43) -- (-1.63, -3.46) -- (-1.66, -3.43) -- (-1.63, -3.4) -- cycle; 
    \node at (-1.63, -3.46) [anchor=north] {\small{c}}; 
\end{scope}

\node[above left=-0.06cm and -0.5cm of p1.north west, font=\bfseries,blue] (so1){};
\node[above left=-0.06cm and -0.5cm of p3.north west, font=\bfseries,blue] (so1){\textbf{\tiny{$-55^0$}}};
\node[above left=-0.04cm and -0.5cm of p4.north west, font=\bfseries,blue] (so1){\textbf{\tiny{$141^0$}}};
\node[below left=0.13cm and -0.5cm of p5.north west, font=\bfseries,blue] (so1){\textbf{\tiny{$155.8^0$}}};
\node[above left=-0.04cm and -0.6cm of p6.north west, font=\bfseries,blue] (so1){\textbf{\tiny{$163^0$}}};
\node[above left=-0.06cm and -0.5cm of p9.north west, font=\bfseries,blue] (so1){\textbf{\tiny{$-14^0$}}};


\node[below left=0.03cm and -0.3cm of p4.north west, font=\bfseries,blue] (via){};
\node [above right= 0.36cm and 0.5cm of via, align=center] (via1) {};

\draw [draw, ->, >=latex, color=violet] (via) --  node[above, align=center] {} (via1);
\node[above left=-0.1cm and -1.5cm of via1.west,text=violet] (vi) {\footnotesize{Visual Information}};

\node[draw=violet, circle, inner sep=4.6pt, above right=-0.03cm and 0.47cm of p5.north west] (sia) {};
\node [below right= 0.2cm and 0.4cm of sia, align=center] (sia1) {};
\draw [draw, ->, >=latex, color=violet] (sia) --  node[above, align=center] {} (sia1);
\node[above left=-0.3cm and -2.4cm of sia1.west, text=violet] (si) {\footnotesize{Semantic Information}};

\node[above left=0.1cm and -0.07 cm of p5.north east] (ccia) {};
\node [below left= 0.2cm and 0.8cm of ccia, align=center] (ccia1) {};
\draw [draw, ->, >=latex, color=violet] (ccia) --  node[above, align=center] {} (ccia1);
\node[above left=-0.25cm and -0.5cm of ccia1.west, text=violet] (cci) {\footnotesize{Contextual}};
\node[below of=cci,node distance=0.24cm, align=center, text=violet] (cci1) {\footnotesize{Information}};
\node[draw=violet, block1, minimum height=0.3cm, minimum width=1.1cm,inner sep=0.0pt, above right=0.3cm and -0.1cm of p6.north west] (gsia) {\tiny{$\vec{v}=(D,\theta)$}};
\node [below right= -0.5cm and 0.3cm of gsia, align=center] (gsia1) {};
\draw [draw, ->, >=latex, color=violet] (gsia) --  node[above, align=center] {} (gsia1);
\node[below left=-0.17cm and -1.4cm of gsia1.west, text=violet] (gsi) {\footnotesize{Geometric}};
\node[below of=gsi,node distance=0.24cm, align=center, text=violet] (gsi1) {\footnotesize{Information}};
\node [dash_blk, draw=black!30, inner sep=0.3pt, xshift=-2.7pt, yshift=1.0pt, fit = (p1) (p6) (p3) (p4) (p5) (p9) (so1) (cci1) (gsi) (gsi1)] (gsr) {};

\node [obnode,fill=cyan, below left =2.4cm and -0.7cm of p1, align=center] (o1) {};
\node[above left=-0.6cm and -0.36cm of o1.north west] (l2) {\textbf{\tiny{bed}}};
\node [obnode,fill=Magenta, right of=o1, node distance=0.8cm, align=center] (o2) {};
\node[above left=-0.61cm and -0.43cm of o2.north west] (l3) {\textbf{\tiny{painting}}};
\node [obnode,fill=WildStrawberry, right of=o2, node distance=0.9cm, align=center] (o3) {};
\node[above left=-0.58cm and -0.5cm of o3.north west] (l4) {\textbf{\tiny{sofachair}}};
\node [obnode,fill=black!10, right of=o3, node distance=0.7cm, align=center] (o4) {};
\node[above left=-0.58cm and -0.3cm of o4.north west] (l4) {\textbf{\tiny{tv}}};
\node [obnode,fill=Yellow, right of=o4, node distance=0.8cm, align=center] (o6) {};
\node[above left=-0.58cm and -0.46cm of o6.north west] (l4) {\textbf{\tiny{window}}};

\node[below left=1.3cm and -1.9cm of sp, align=center, draw, text width=0.93\columnwidth, node distance=1cm, rounded corners, fill=yellow!20, inner sep=2pt, outer sep=0pt] (cp) {\footnotesize{A. $360^{\circ}$ saliency graph representation (node orientation in blue).}};

\node [below left = 1.8cm and -2.1cm of sp, align=center] (ov) {\includegraphics[width=0.49\textwidth, height=2.1in]{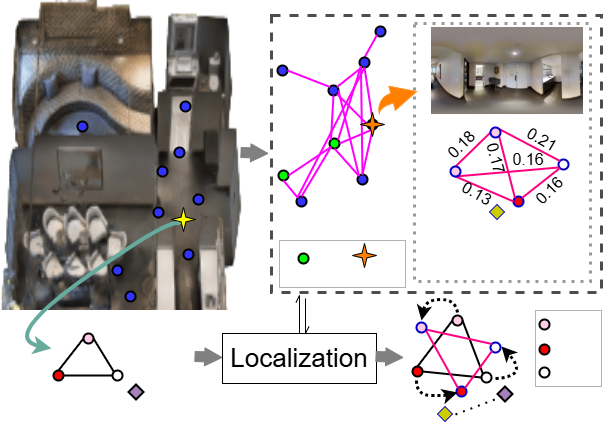}};

\node[above right=-2.1cm and -3.5cm of ov, align=center] (vpnode) {\tiny{$v_p\mapsto c$}};

\node[below right=0.74cm and 0.55cm of vpnode, align=center] (vpnode1) {\small{$c$}};

\node[above right=-3.2cm and -5.05cm of ov, align=center] (tpg) {\footnotesize{Topological map}};

\node[below of=tpg, node distance=0.66cm, align=center] (can) {\tiny{Candidate \ Matched}};

\node[below right=-0.0cm and 0.3cm of tpg, align=center] (cag) {\footnotesize{Saliency Graph}};

\node[below left=-0.4cm and -2.1cm of ov, align=center] (qg) {\footnotesize{Query Graph}};
\node[above right=-0.01cm and 0.1cm of qg, align=center] (refa2) {\small{c'}};

\node[below right=-0.2cm and -2.5cm of ov, align=center] (ref1) {\small{c}};
\node[above right=-0.12cm and 0.4cm of ref1, align=center] (ref2) {\small{c'}};

\node[below right=0.01cm and -2.7cm of ov, align=center] (rea) {\footnotesize{Alignment}};

\node[above right=0.06cm and 0.4cm of ref2, align=center] (obj3) {\tiny{cabinet}};
\node[above of=obj3, node distance=0.3cm, align=center] (obj2) {\tiny{table}};
\node[above of=obj2, node distance=0.3cm, align=center] (obj1) {\tiny{door}};

\node[below left=7.9cm and -1.8cm of sp, align=center, draw, text width=0.93\columnwidth, node distance=1cm, rounded corners, fill=yellow!20, inner sep=2pt, outer sep=0pt] (cp1) {\footnotesize{B. localization and positioning method overview.}};

\end{tikzpicture}
\vspace{-0.6cm}
\caption{(A) Orientation-rich $360^{\circ}$ saliency graph representation of a scene, constructed from an RGB-D panorama image. (B) Localization in a topological map and positioning within a localized scene using $360^{\circ}$ saliency graphs. Localization: via scene graph matching using visual, semantic, and contextual information embedded in the $360^0$ saliency graph; Positioning in the scene: by aligning geometric information in a $360^0$ saliency graph. }
\label{fig:360graph}
\vspace{-4.5mm}
\end{figure}


A summary of a panorama image can be generated by detecting objects in the image \cite{guerrero2020s, gao2021room, mehan2024questmaps}, and subsequently can be utilized to represent scenes at key locations of topological graphs \cite{wang2024graph}. 
However, listing all the objects in the scene is not an effective scene representation, as the scene at various locations often contains similar objects in similar environments, e.g., office rooms, meeting spaces, and conference rooms are typically composed of identical objects (e.g., chairs, tables, etc.). Consequently, object-based scene representation is prone to failure in localizing such scenarios. This limitation motivates the use of the salient objects, which offer a more distinctive and discriminative scene representation, for scene summary. Saliency highlights prominent objects by assigning importance scores based on their distinctiveness, reducing the impact of irrelevant objects \cite{meena2024volumetric}. For example, an oven in a kitchen or a bathtub in a bathroom would be deemed more informative than common items such as chairs. However, representing a scene solely as a collection of salient objects still lacks structural and contextual information, which are crucial for robust and accurate positioning in navigation tasks. 
Recent methods \cite{du2020learning} leverage contextual information along with objects to improve efficiency. However, these methods consider all objects and their relations equally important, which can lead to suboptimal performance, as each component has a different significance in the unique scene representation. Several existing methods \cite{qin2021semantic} employ semantic graphs for scene representation and positioning in navigation tasks. However, the effectiveness of semantic graphs is highly dependent on the accuracy of object detection and classification. To address this limitation, the proposed approach focuses on salient objects extracted using volumetric saliency, which emphasizes larger objects within the scene. This helps to mitigate the impact of undetected smaller or occluded objects. Also, the semantic graph representation-based topological map is not well-suited for orientation estimation because it does not preserve geometric information. Thus, path planning cannot be performed.

This paper introduces a navigation-specific scene representation through $360^{\circ}$ saliency graph generated from an RGB-D panorama image. For navigation, a topological map of an indoor environment in which each key location is accompanied by a $360^{\circ}$ saliency graph of that location is constructed. A $360^{\circ}$ saliency graph effectively encodes visual, contextual, semantic, and geometric information of scene by representing salient objects as nodes, relations between object pairs as edges, object's label and edge weights as semantic, and descriptors like angular position, saliency score, $3$D location as geometric information. Salient objects are identified using volumetric saliency \cite{meena2024volumetric}, which is based on the observation that larger objects occupy a greater portion of a scene and thus contribute more significantly to its representation \cite{oliva2020ganalyze}. A key advantage of volumetric saliency \cite{meena2024volumetric} is its ability to detect scene-defining objects regardless of their position in the foreground or background. The representation of a $360^{\circ}$ saliency graph on a sphere enables us to detect orientation from the relative positions of salient objects. As shown in Figure \ref{fig:360graph}, given a RGB-D panorama image of an unknown location, we utilize the $360^{\circ}$ saliency graph as scene representation and compare it against those in the pre-constructed queryable topological map for the candidate graph or location selection. Then, a best-matched candidate graph is utilized to estimate the pose accurately by aligning the reference points and salient nodes. A panoramic image can be explicitly represented via a $360^{\circ}$ saliency graph compared to a saliency graph, as shown in Fig. \ref{fig:360graph}. 
The key contributions of the paper are as follows: 
          (i) A navigation-specific scene representation method through an orientation-rich $360^{\circ}$ saliency graph derived from a panorama RGB-D image. The proposed graph jointly encodes \textit{Visual}, \textit{Semantic}, \textit{Contextual}, and \textit{Geometric} information within a unified spherical structure that covers the entire $360^{\circ}$ field of view, as illustrated in Fig. \ref{fig:360graph}. Unlike existing navigation-oriented representations (which either rely on a single information or treat all components equally and different types of information independently), the proposed graph integrates heterogeneous scene's information into a structurally consistent spherical graph. This yields a compact yet highly discriminative scene representation that preserves both spatial layout and local orientation relationships characteristic of indoor scenes. This new representation facilitates efficient and robust localization for navigation, demonstrated in the experimental results of the paper. (ii) A $360^{\circ}$ saliency graph representation-based localization and positioning approach to facilitate indoor navigation. This work analyzes the performance of the proposed work for $2$D navigation on a given topological map by matching a query graph to the reference $360^{\circ}$ saliency graphs. The relative orientation of salient objects in the query graph with respect to those encoded in the spherical $360^{\circ}$ saliency graph is exploited for positioning and navigation planning.         
        

The remainder of the paper is organized as follows: Section \ref{sec:prior} discusses related works on scene representation used for indoor navigation. Section \ref{sec:method} presents the proposed $360^{\circ}$ saliency graph-based representation, along with the localization and positioning methods for navigation. Experimental results and comparisons are presented in Section \ref{sec:results}. The paper concludes in Section \ref{sec:conclusion} with a discussion on future works.

\vspace{-0.1cm}
\section{Prior Works}
\label{sec:prior}
\vspace{-0.13cm}

Effective scene representation is an integral part of vision-based applications such as indoor navigation \cite{koh2021pathdreamer, zhou2022relational, mehan2024questmaps}.
Methods designed for navigation in a known environment \cite{moravec1985high} utilize available maps, while navigation in an unknown environment \cite{konolige2011navigation, xu2024robot} requires mapping, i.e., map creation.   

Visual navigation in known environments, as per the focus of this paper, utilizes a variety of navigational-specific scene representations \cite{labrosse2007short, yu2011image, an2024etpnav, liu2024volumetric, taioli2024mind} for effective localization, positioning and orientation detection.  
Labrosse et al. \cite{labrosse2007short} utilized whole $2$D panorama images rather than a few landmarks extracted from images. Yu et al. \cite{yu2011image} utilized a sequence of snapshot images for landmark-based navigation. 
Cha et al. \cite{cha2012omni} use optic flow information by employing the Kanade-Lucas-Tomashi(KLT) algorithm to estimate the movement of agents for homing navigation. Lee et al. \cite{lee2018visual} utilized Haar-like Features for scene representation. 
Recently, Koh et al. \cite{koh2021pathdreamer} proposed a stochastic hierarchical visual world model termed Pathdreamer that represents the scene via visual, spatial, and semantic information to generate realistic $360^{\circ}$ panorama images for unseen trajectories in real-world buildings. 
Krantz et al. \cite{krantz2021waypoint} utilize feature-based scene representation in which the RGB-D panorama image is encoded with a ResNet-$50$ \cite{he2016deep}. Several proposed works \cite{du2021curious, khandelwal2022simple, li2022envedit, hong2023learning} learn the scene representations with a contrastive learning approach. Hong et al. \cite{hong2023learning} employed a visual representation learning method (Ego$^2$-Map) that trains a visual‑transformer by contrasting egocentric RGB views with semantic maps. However, Ego$^2$-Map requires pre‑constructed semantic maps (or dense semantic annotations) for training, which imposes a significant data-collection cost and limits applicability to only annotated or well‑mapped environments.
Pramatarov et al. \cite{pramatarov2022boxgraph} introduced BoxGraph, extending the camera-based semantic graph approach of \cite{qin2021semantic} to $3$D LiDAR point clouds representation by replacing $2$D bounding boxes with $3$D semantic blobs. Their method estimates $6$-DoF pose using SVD on vertex centroids. However, it relies on accurate alignment between query and reference points, making it sensitive to noise and lacking built-in outlier rejection, requiring additional preprocessing such as RANSAC. Moreover, the use of $3$D LiDAR point clouds captures partial objects and lacks explicit orientation information. Xu et al. \cite{xu2023free} proposed a panoramic image-based rendering algorithm for free-viewpoint navigation of indoor scenes. The authors represent each input RGB-D panorama with a set of spherical superpixels and warp each superpixel individually. 
Kim et al. \cite{kim2023topological} utilized a topological semantic graph memory (TSGM) for navigation, which has two types of nodes: image nodes and object nodes, and a cross-graph mixer module that takes the collected nodes to get contextual information.
More recently, Wang et al. \cite{wang2024graph} proposed a cross-modal attention navigation framework and an environment representation graph through object detection and GCN.
An et al. \cite{an2024etpnav} proposed a method, ETPNav, that performs online topological mapping of environments for navigation by self-organizing predicted waypoints along a traversed path without prior environmental experience. The nodes/key locations of the topological map are represented through the feature vectors extracted from panorama RGB-D images using visual encoders.
Liu et al. \cite{liu2024volumetric} proposed a volumetric environment representation (VER), which aggregates the perspective features into structured $3$D cells and utilizes volume state estimation for navigation.
Liu et al. \cite{liu2024multiple} leverage multiple visual features (e.g., RGB feature, texture, color, object type, depth feature) for scene representation of nodes within the topological map. However, the method does not explicitly model geometric or contextual relations between different regions within the scene. Moreover, the final visual feature used for scene representation is obtained by summing multiple visual‑feature streams into a single descriptor, which may blur the fine structural or orientation‑specific information (e.g., object orientation or relative location).

Existing visual navigation methods represent the scenes either using a panorama image's visual feature or assigning equal importance to all components/objects present within a scene, resulting in poor scene representation. Moreover, localization based on visual and object semantic information leads to suboptimal performance. This paper introduces a $360^{\circ}$ saliency graph representation that effectively encodes visual, contextual, object semantic, and edge semantic information. Alongwith, this graph incorporate salient-object-based orientation information for navigation controls. 

In addition to visual navigation, several approaches leverage natural language instructions within the visual environment, giving rise to Vision-and-Language Navigation (VLN) methods. VLN involves not only interpreting visual inputs but also aligning them with linguistic cues for efficient navigation. Recent VLN approaches, including \cite{wang2019reinforced}, \cite{qi2020reverie}, \cite{gao2021room}, \cite{guhur2021airbert}, \cite{qiao2022hop}, \cite{chen2022think}, and \cite{qiao2023march}, utilize either visual features from panoramic images or object-level representations as scene representations for localization within the navigation pipeline. This often results in suboptimal performance due to limited encoding of scene context. In contrast, this paper utilizes salient objects along with their semantic and geometric properties, orientation, and the structural layout of the scene captured through contextual relationships between salient objects. 

\setlength{\fboxsep}{0.9pt}     
\tikzstyle{block} = [draw, rectangle]  
\tikzstyle{dash_blk} = [draw, rectangle, dashed, draw=black!40, thick, inner sep=0pt, outer sep=0pt]  
\tikzstyle{block1} = [draw=none, thick, minimum height=3em, minimum width=6em]
\tikzstyle{input} = [coordinate]
\tikzstyle{output} = [coordinate]
\tikzstyle{dotted_block}=[draw, rectangle, line width=1pt, dash pattern=on 1.5pt off 3pt on 6.5pt off 3pt, rounded corners]
\tikzstyle{obnode}=[draw,circle,inner sep=1.4pt]
\tikzstyle{dinode}=[diamond,draw,inner sep=1.2pt]

\begin{figure*}[!hbt]
\centering
\begin{tikzpicture}
 
\node [input, name=input] {};

\node [block, draw=black!90, fill=purple!2, below right=-3.4cm and -8.3cm of input, minimum height=6.6cm, minimum width=0.99\textwidth] (TM) {};

\node [block, draw=black!90, fill=green!2, below right=3.1cm and -8.3cm of input, minimum height=2.2cm, minimum width=0.99\textwidth] (loca) {};

\node [left of=input,  node distance=0.4cm, align=center] (A) {};

\node [left of=input,  node distance=7.1cm, align=center] (i1) {\includegraphics[width=.13\textwidth]{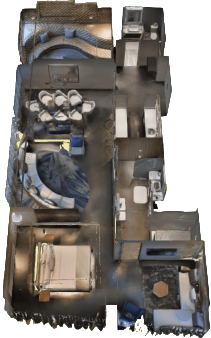}\\ \tiny{Building}};

\node [right of=i1,  node distance=2.6cm, align=center] (i2) {\includegraphics[width=.09\textwidth, height=.08\textwidth]{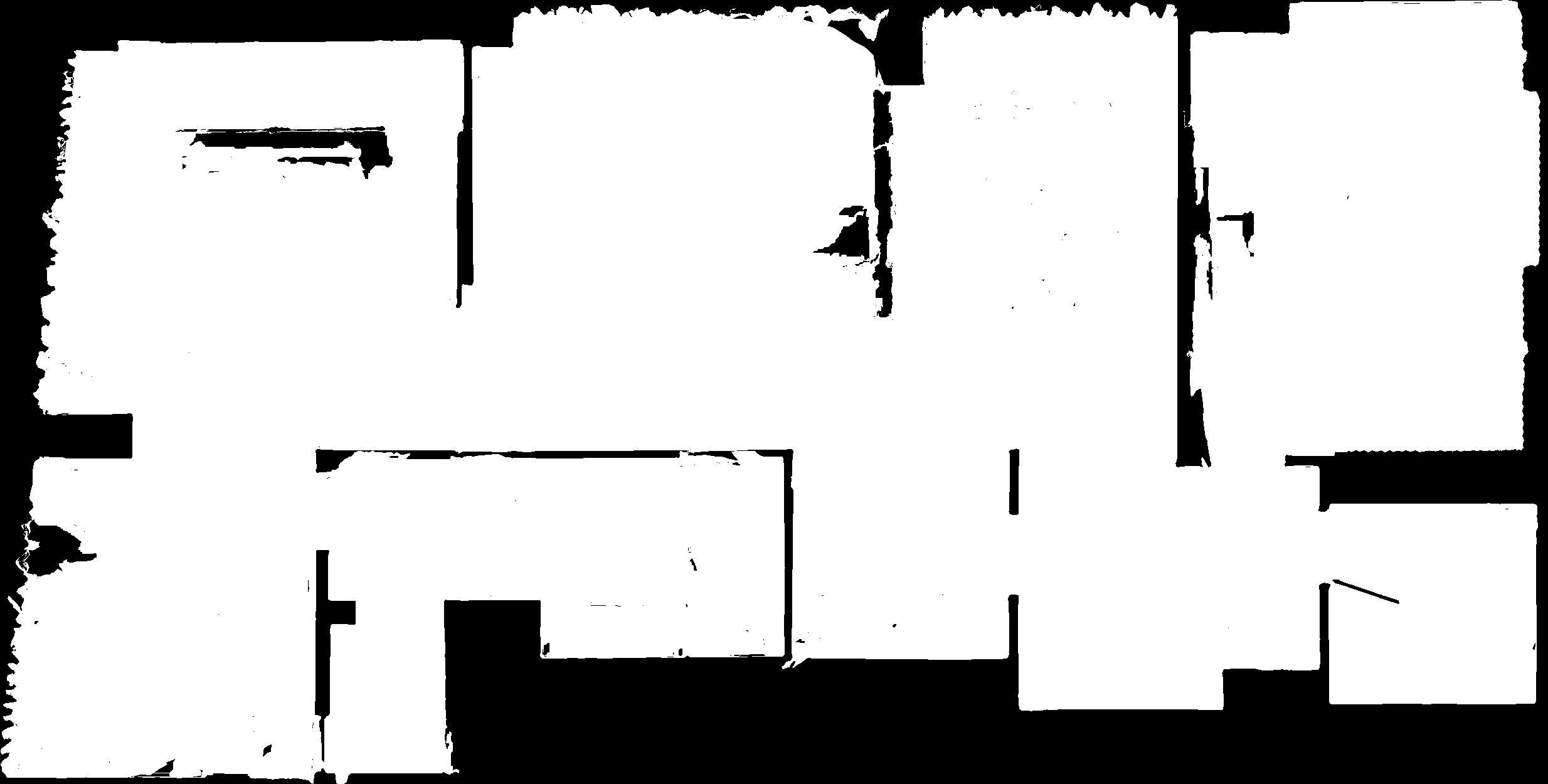}\\ \tiny{Floor Map} \vspace{0.05cm}\\ \includegraphics[width=.09\textwidth, height=.08\textwidth]{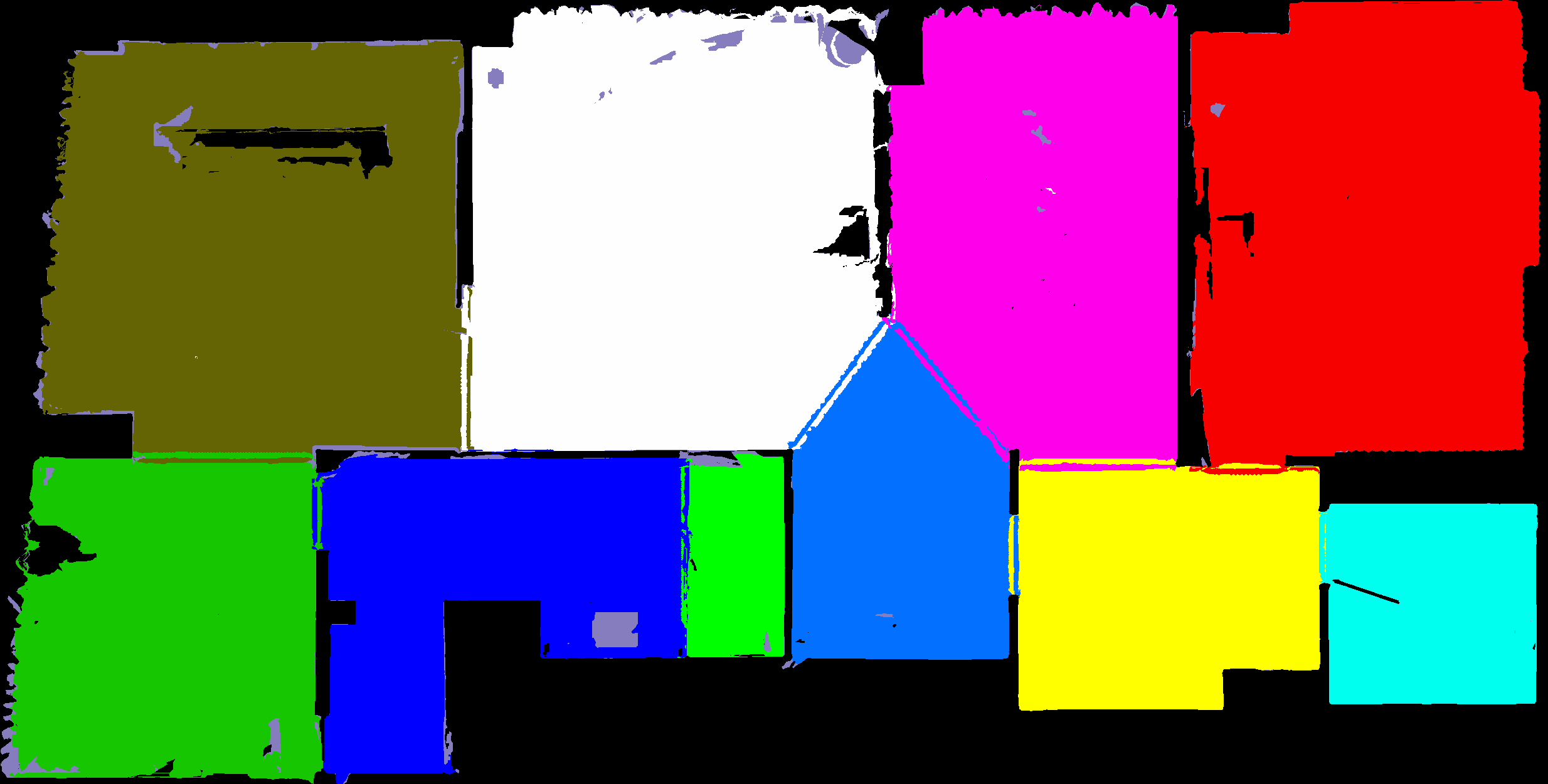}\\ \tiny{Region Semantic Mask}\vspace{0.05cm} \\ \includegraphics[width=.09\textwidth, height=.08\textwidth]{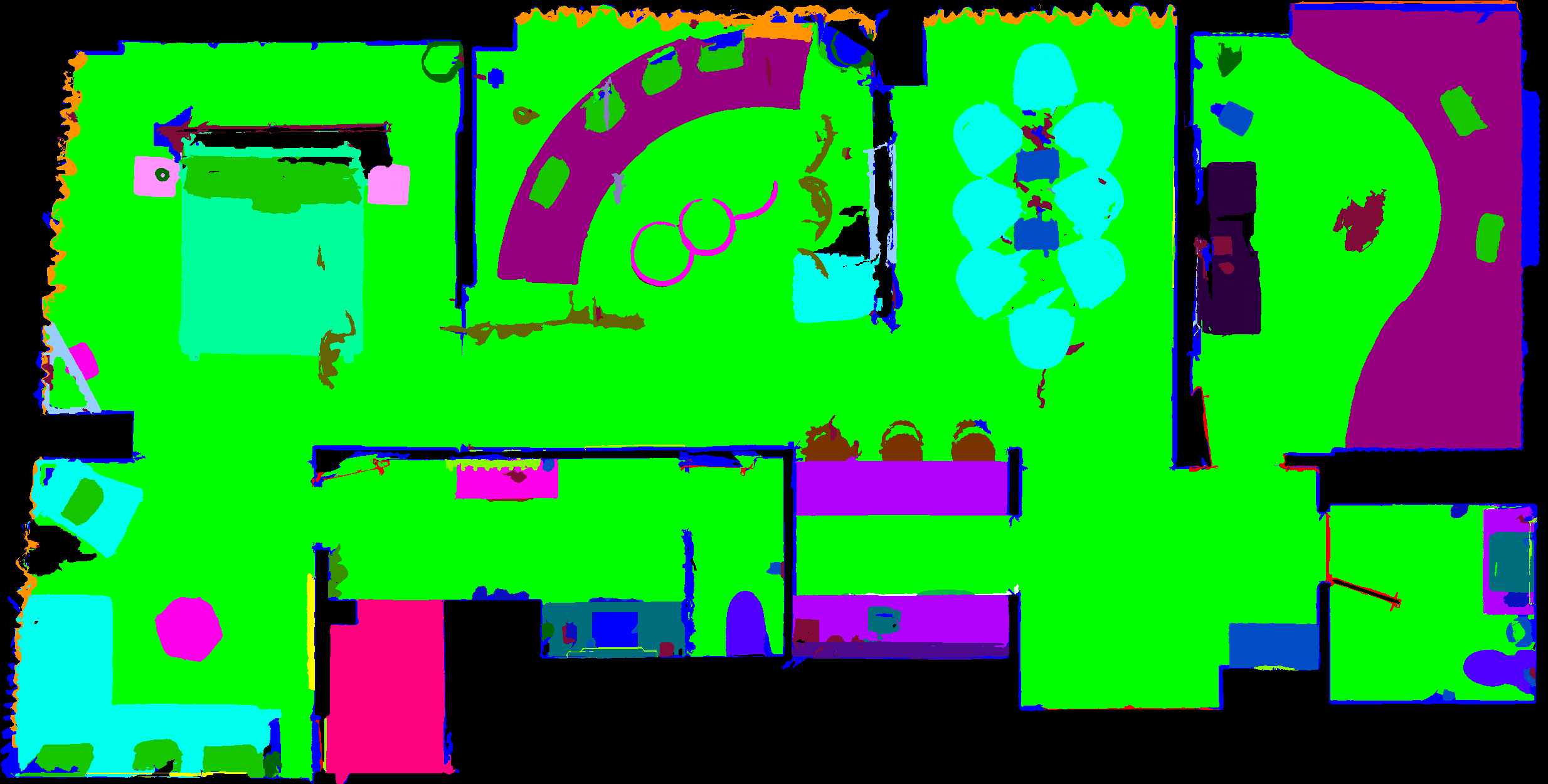}\\ \tiny{Object Semantic Mask}  };

\draw [->,line width=1.4pt, >=latex] (i1) -- node {} (i2);

\node [right of=i2,  node distance=3.0cm, align=center, rotate=90] (i3) {\includegraphics[width=.22\textwidth, height=.18\textwidth]{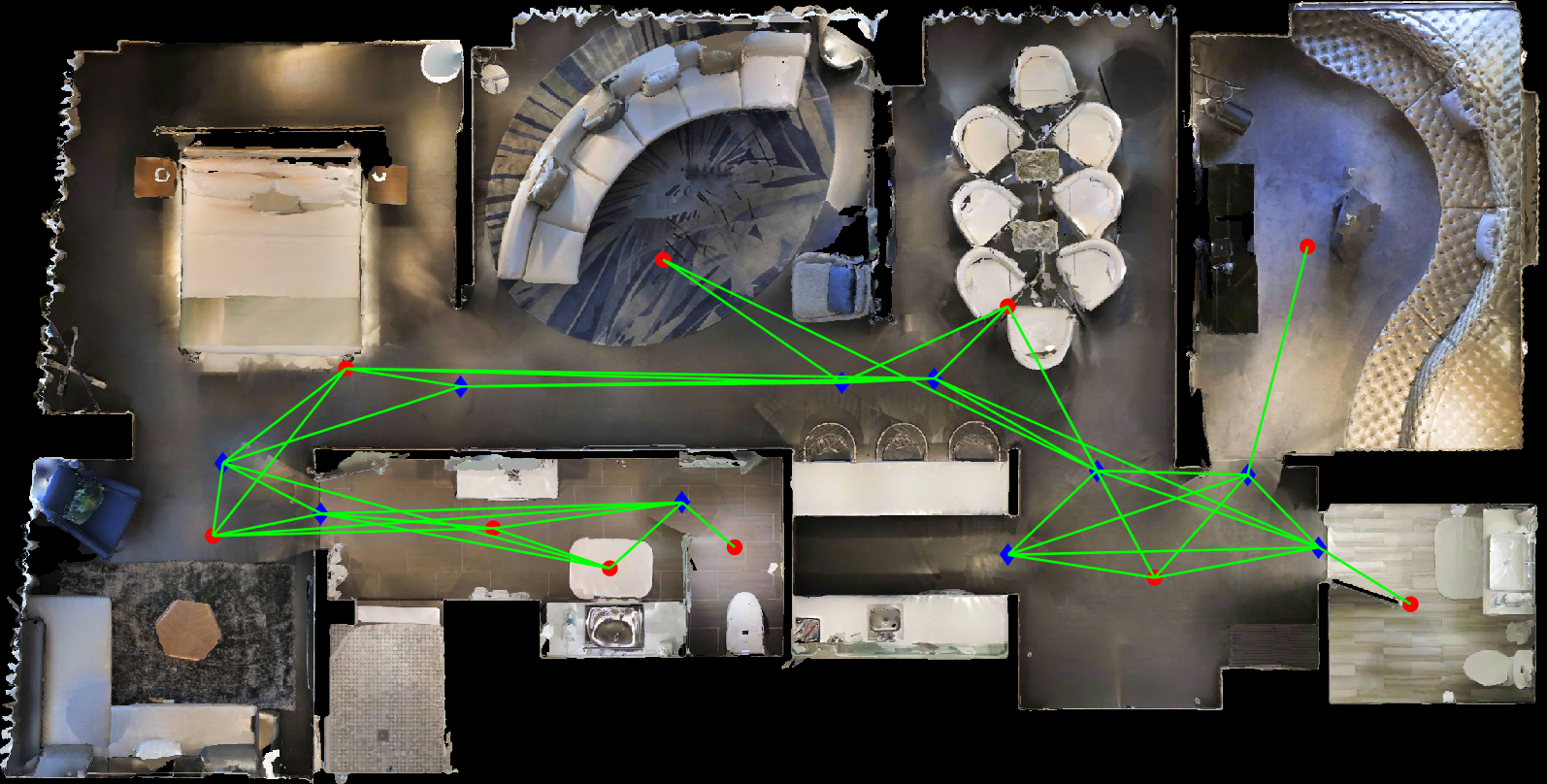}};
\node[below left = -0.1cm and 0.8cm of i3, align=center] {\tiny{Topological Map}};

\draw [->,line width=1.4pt, >=latex] (i2) -- node {} (i3);

\node [right of=i3,  node distance=2.2cm, align=center] (li) {};
\draw [->,line width=1.4pt, >=latex] (i3) -- node {} (li);

\node [dinode,fill=blue, below right = 2.0cm and 1.2cm of i3, align=center] (tn6) {}; 
\node [dinode,fill=blue, below right = 0.9cm and 0.6cm of tn6, align=center] (tn3) {}; 
\node [dinode,fill=blue, above right = 0.8cm and 0.1cm of tn6, align=center] (tn5) {}; 
\node [dinode,fill=blue, above right = -0.5cm and 1.0cm of tn5, align=center] (tn8) {}; 
\node [dinode,fill=blue, below right = 0.18cm and 0.8cm of tn8, align=center] (tn7) {};
\node [dinode,fill=blue, above of= tn8,  node distance=0.8cm, align=center] (tn9) {}; 
\node [dinode,fill=blue, above right = 0.4cm and 0.9cm of tn9, align=center] (tn10) {}; 
\node [dinode,fill=blue, below right = 0.52cm and 0.5cm of tn3, align=center] (tn1) {};
\node [dinode,fill=blue, above right = 0.23cm and 0.6cm of tn1, align=center] (tn2) {}; 
\node [dinode,fill=blue, above right = 0.7cm and 0.6cm of tn2, align=center] (tn4) {}; 

\node [obnode,fill=red, above right = 0.6cm and 0.7cm of tn8, align=center] (n11) {}; 
\node [obnode,fill=red, above left = 0.55cm and 0.6cm of tn5, align=center] (n14) {}; 
\node [obnode,fill=red, above right = 0.9cm and 0.4cm of tn10, align=center] (n16) {}; 
\node [obnode,fill=red, below left = 0.6cm and 0.6cm of tn6, align=center] (n17) {}; 
\node [obnode,fill=red, above right = 0.1cm and 0.7cm of tn4, align=center] (n18) {}; 
\node [obnode,fill=red, above left = 0.5cm and 0.5cm of tn9, align=center] (n19) {}; 
\node [obnode,fill=red, below right = 0.8cm and 1.2cm of tn4, align=center] (n13) {}; 
\node [obnode,fill=red, below right = 0.25cm and 1.15cm of tn4, align=center] (n15) {}; 
\node [obnode,fill=red, below right = 0.7cm and 0.4cm of tn2, align=center] (n12) {}; 
\node [obnode,fill=red, above left = 0.1cm and 1.2cm of tn1, align=center] (n20) {};

\draw [draw,-] (tn1) -- node[sloped, above, color=blue!80, yshift=-0.5mm] {\tiny{27.3}} (tn2);
\draw [draw,-] (tn1) -- node[sloped, above, color=blue!80, yshift=-0.5mm] {\tiny{-17.8}} (tn3);
\draw [draw,-] (tn1) -- node[sloped, below, color=blue!80, yshift=0.5mm] {\tiny{97.9}} (n12);
\draw [draw,-] (tn1) -- node[sloped, below, color=blue!80, xshift=2.5mm, yshift=0.5mm] {\tiny{15.2}} (n13);
\draw [draw,-] (tn1) -- node[sloped, above, color=blue!80, yshift=-0.5mm] {\tiny{-37.5}} (n20);

\draw [draw,-] (tn2) -- node[sloped, above, color=blue!80, yshift=-0.5mm] {\tiny{-1.7}} (tn4);
\draw [draw,-] (tn2) -- node[sloped, below, color=blue!80, yshift=0.5mm] {\tiny{168.4}} (n12);
\draw [draw,-] (tn2) -- node[sloped, above, color=blue!80, yshift=-0.5mm] {\tiny{10.7}} (n13);
\draw [draw,-] (tn2) -- node[sloped, above, color=blue!80, yshift=-0.5mm] {\tiny{4.8}} (n15);

\draw [draw,-] (tn3) -- node[sloped, below, color=blue!80,  xshift=3.1mm, yshift=0.7mm] {\tiny{-0.5}} (tn5);
\draw [draw,-] (tn3) -- node[sloped, below, color=blue!80, yshift=0.6mm] {\tiny{-0.9}} (tn6);
\draw [draw,-] (tn3) -- node[sloped, above, color=blue!80, yshift=-0.5mm] {\tiny{-171}} (n20);

\draw [draw,-] (tn4) -- node[sloped, above, color=blue!80, xshift=3.1mm, yshift=-0.5mm] {\tiny{175.9}} (n12);
\draw [draw,-] (tn4) -- node[sloped, above, color=blue!80,  xshift=3.6mm, yshift=-0.7mm] {\tiny{137.7}} (n13);
\draw [draw,-] (tn4) -- node[sloped, above, color=blue!80, yshift=-0.5mm] {\tiny{172.2}} (n15);
\draw [draw,-] (tn4) -- node[sloped, above, color=blue!80, yshift=-0.5mm] {\tiny{40.4}} (n18);

\draw [draw,-] (tn5) -- node[sloped, above, color=blue!80, xshift=0.1mm, yshift=-0.9mm] {\tiny{-2.6}} (tn6);

\draw [draw,-] (tn5) -- node[sloped, below, color=blue!80, xshift=0.7mm, yshift=0.8mm] {\tiny{-24.7}} (n14);
\draw [draw,-] (tn5) -- node[sloped, above, color=blue!80, yshift=-0.8mm] {\tiny{-145.4}} (n17);
\draw [draw,-] (tn5) -- node[sloped, above, color=blue!80, xshift=0.7mm, yshift=-0.8mm] {\tiny{-178.3}} (n20);

\draw [draw,-] (tn6) -- node[sloped, above, color=blue!80,  xshift=1.5mm, yshift=-0.6mm] {\tiny{29.5}} (tn8);
\draw [draw,-] (tn6) -- node[sloped, above, color=blue!80, xshift=-1.2mm, yshift=-0.5mm] {\tiny{23.7}} (tn10);
\draw [draw,-] (tn6) -- node[sloped, below, color=blue!80, xshift=0.2mm, yshift=0.5mm] {\tiny{-44.5}} (n14);
\draw [draw,-] (tn6) -- node[sloped, above, color=blue!80, yshift=-0.5mm] {\tiny{-179}} (n20);


\draw [draw,-] (tn7) -- node[sloped, below, color=blue!80, yshift=0.5mm] {\tiny{-43.3}} (tn8);	
\draw [draw,-] (tn7) -- node[sloped, below, color=blue!80, yshift=0.5mm] {\tiny{-18.5}} (tn9);
\draw [draw,-] (tn7) -- node[sloped, below, color=blue!80, yshift=0.5mm] {\tiny{-1.3}} (tn10);
\draw [draw,-] (tn7) -- node[sloped, below, color=blue!80, yshift=0.8mm] {\tiny{8.9}} (n11);
\draw [draw,-] (tn8) -- node[sloped, above, color=blue!80, xshift=-1.7mm, yshift=-0.6mm] {\tiny{1.5}} (tn9);
\draw [draw,-] (tn8) -- node[sloped, below, color=blue!80, xshift=3.9mm, yshift=0.6mm] {\tiny{19}} (tn10);
\draw [draw,-] (tn8) -- node[sloped,below, color=blue!80, xshift=1.8mm, yshift=0.7mm] {\tiny{61.7}} (n11);
\draw [draw,-] (tn8) -- node[sloped, above, color=blue!80, yshift=-0.6mm] {\tiny{-118.5}} (n14);
\draw [draw,-] (tn8) -- node[sloped, below, color=blue!80, xshift=3.8mm, yshift=0.5mm] {\tiny{-154}} (n17);

\draw [draw,-] (tn9) -- node[sloped, above, color=blue!80, xshift=-1.2mm,yshift=-0.7mm] {\tiny{45.6}} (tn10);
\draw [draw,-] (tn9) -- node[sloped, below, color=blue!80, yshift=0.6mm] {\tiny{132}} (n11);
\draw [draw,-] (tn9) -- node[sloped, below, color=blue!80, yshift=0.5mm] {\tiny{-75.2}} (n19);

\draw [draw,-] (tn10) to[out=320,in=0] node[sloped, below, color=blue!80, xshift=0.8mm, yshift=0.6mm] {\tiny{169.5}} (n11);
\draw [draw,-] (tn10)  -- node[sloped, below, color=blue!80, yshift=0.7mm] {\tiny{31.6}} (n16);
\draw [draw,-] (n11) to[out=105,in=80] node[sloped, above, color=blue!80, xshift=0.8mm, yshift=-0.6mm] {\tiny{-118.4}} (n14);
\draw [draw,-] (n12) to[out=-40,in=-10] node[sloped, below, color=blue!80, yshift=0.5mm] {\tiny{-1.5}} (n15);
\draw [draw,-] (n12) to[out=190,in=-25] node[sloped, below, color=blue!80, yshift=0.5mm] {\tiny{-51.5}} (n20);
\draw [draw,-] (n13) -- node[sloped, above, color=blue!80, yshift=-0.7mm] {\tiny{-161}} (n15);

\node[draw=cyan,circle, inner sep=7.6pt, above right=-0.32cm and -0.19cm of n16.north west] (mfc) {};

\node[above of= mfc,node distance=0.15cm]{\tiny{scene $2$}};


\node [right of=tn10,  node distance=3.55cm, align=center] (i4) {\colorbox{cyan!60}{\includegraphics[width=.15\textwidth, height=.08\textwidth]{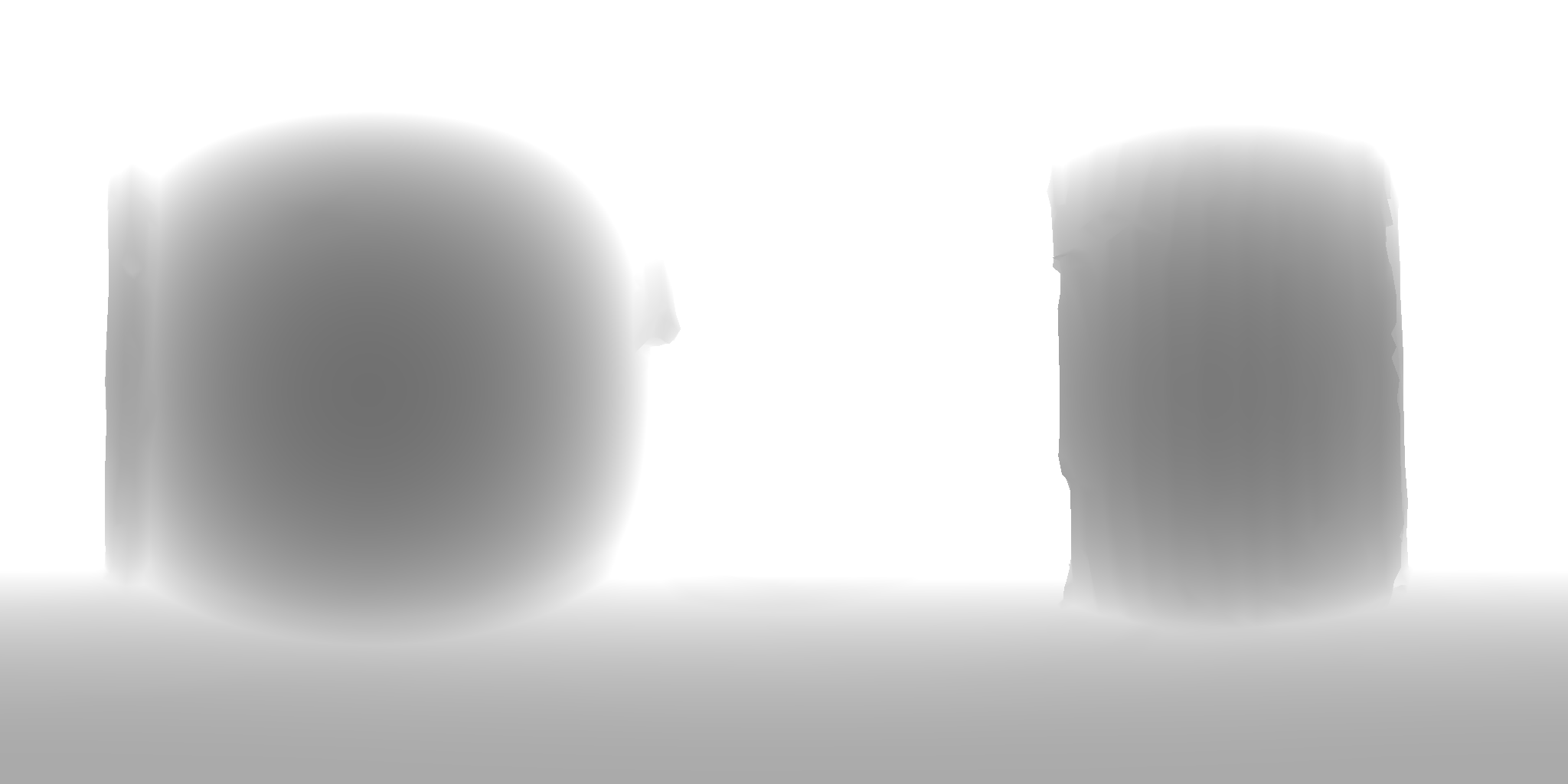}}};
\node [above left = -1.9cm and -3.2cm of i4] (i5) {\colorbox{cyan!60}{\includegraphics[width=.15\textwidth, height=.08\textwidth]{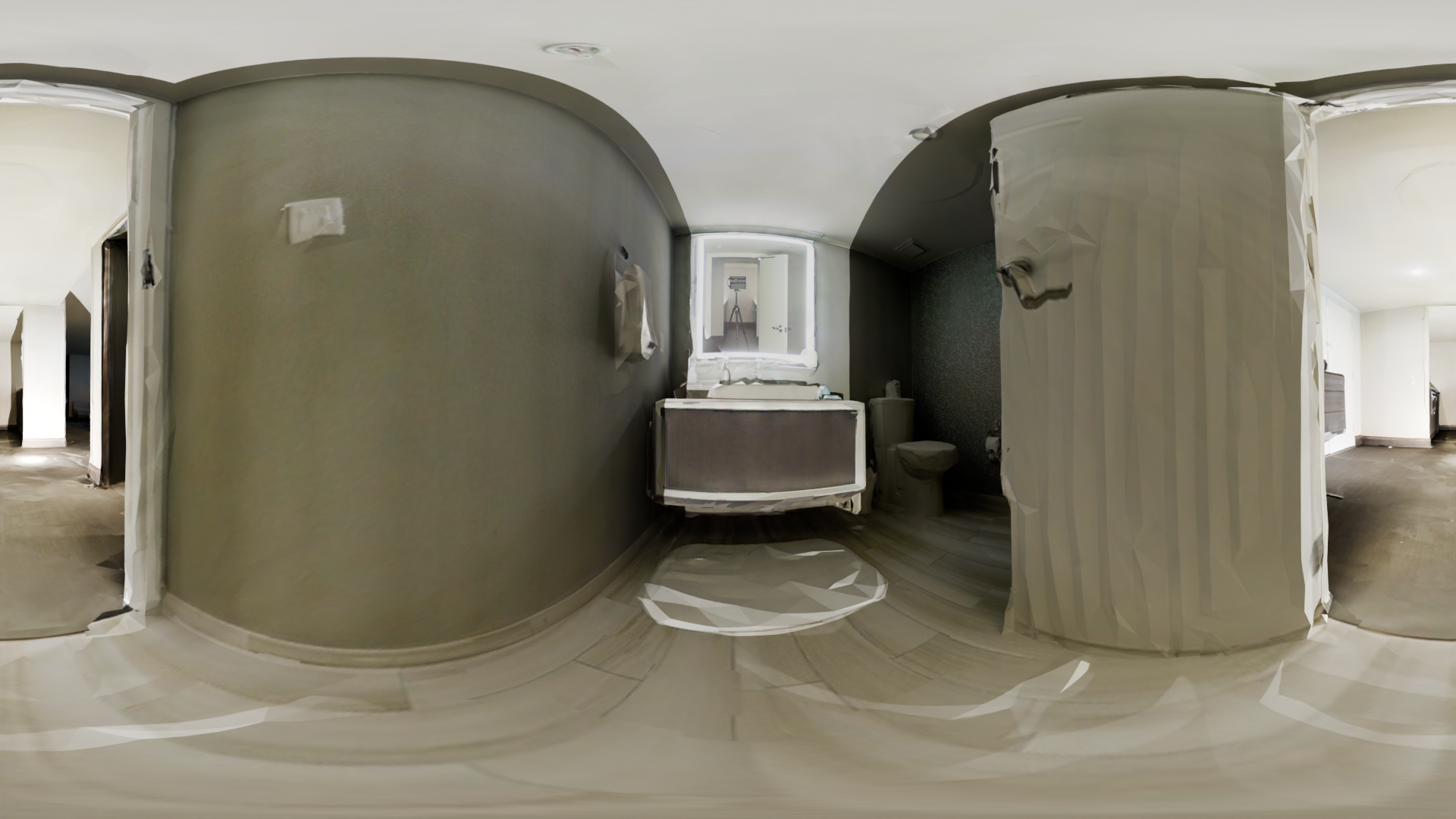}}};


\node [right of=n16 ,node distance=2.93cm, align=center] (arrow) {\quad\quad \footnotesize{\ Panorama RGB-D Scene}};

\node [above right= -0.8cm and 1.83cm of n16, align=center, rotate=70] (arrow1) {\includegraphics[width=.045\textwidth, height=.09\textwidth]{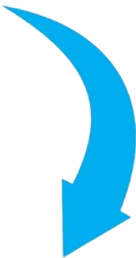}};

\node [right of=i5 ,node distance=0.8cm, align=center] (i5i) {};
\node [block, below of=i5,  node distance=1.4cm, align=center, fill=magenta!3] (od) {\footnotesize{Salient Object} \footnotesize{Detection}};
\node [below of=i5,  node distance=0.65cm, align=center] (i5p1) {}; \node [below of=od,  node distance=0.7cm, align=center] (i5p2) {}; 
\draw [draw,->,line width=1.6pt, >=latex] (i5p1) -- node {} (od); \draw [draw,->,line width=1.6pt, >=latex] (od) -- node {} (i5p2);
 

\node [obnode,fill=White, below of=i5i ,node distance=2.5cm, align=center] (p14) {}; 
\node [obnode,fill=Gray, left of=p14, node distance=1.8cm, align=center] (p24) {}; 
\node [obnode,fill= Magenta, below of= p24, node distance=1.0cm, align=center] (p34) {}; 
\node [obnode,fill=SpringGreen, below of= p14, node distance=1.4cm, align=center] (p44) {}; 

\node [obnode,fill=cyan, below left = 0.3cm and 0.7cm of p14, align=center] (p54) {}; 
\node [obnode,fill=Thistle, below right = 1.1cm and 0.4cm of p14, align=center] (p64) {}; 

\draw [draw,-] (p14) -- node[sloped, below, color=red, yshift=0.5mm] {\tiny{0.124}} (p24);
\draw [draw,-] (p14) -- node[sloped, below, color=red, yshift=0.5mm] {\tiny{0.154}} (p54); \draw [draw,-] (p14) to[out=100,in=115] node[sloped, above, color=red, yshift=-0.5mm] {\tiny{0.23}} (p34);

\draw [draw,-] (p24) -- node[sloped, below, color=red, xshift=-0.4mm,yshift=0.5mm] {\tiny{0.283}} (p34);
\draw [draw,-] (p24) -- node[sloped, below, color=red, xshift=0.6mm,yshift=0.5mm] {\tiny{0.188}} (p54);
\draw [draw,-] (p34) -- node[sloped, below, color=red, yshift=0.5mm] {\tiny{0.329}} (p44);
\draw [draw,-] (p34) -- node[sloped, below, color=red, yshift=0.5mm] {\tiny{0.350}} (p54);
\draw [draw,-] (p44) -- node[sloped, below, color=red, yshift=0.5mm] {\tiny{0.219}} (p54);
\draw [draw,-] (p44) -- node[sloped, below, color=red, yshift=0.3mm] {\tiny{0.145}} (p14);
\draw [draw,-] (p54) -- node[sloped, below, color=red, yshift=0.1mm] {\tiny{0.675}} (p64);


\node[below=1.0cm of p54] (label) {\footnotesize{$360^{\circ}$ Saliency Graph $G$}};


\node [dash_blk, draw=black!50, inner sep=0.0pt, xshift=1.6pt, yshift=5.0pt, fit = (i4) (i5) (od) (p44) (p64) (label), minimum height=5.9cm] (bsg) {};

\node [below of= i1,  node distance=4.2cm, align=center] (qu) {\hspace{0.7cm}\includegraphics[width=.17\textwidth, height=.09\textwidth]{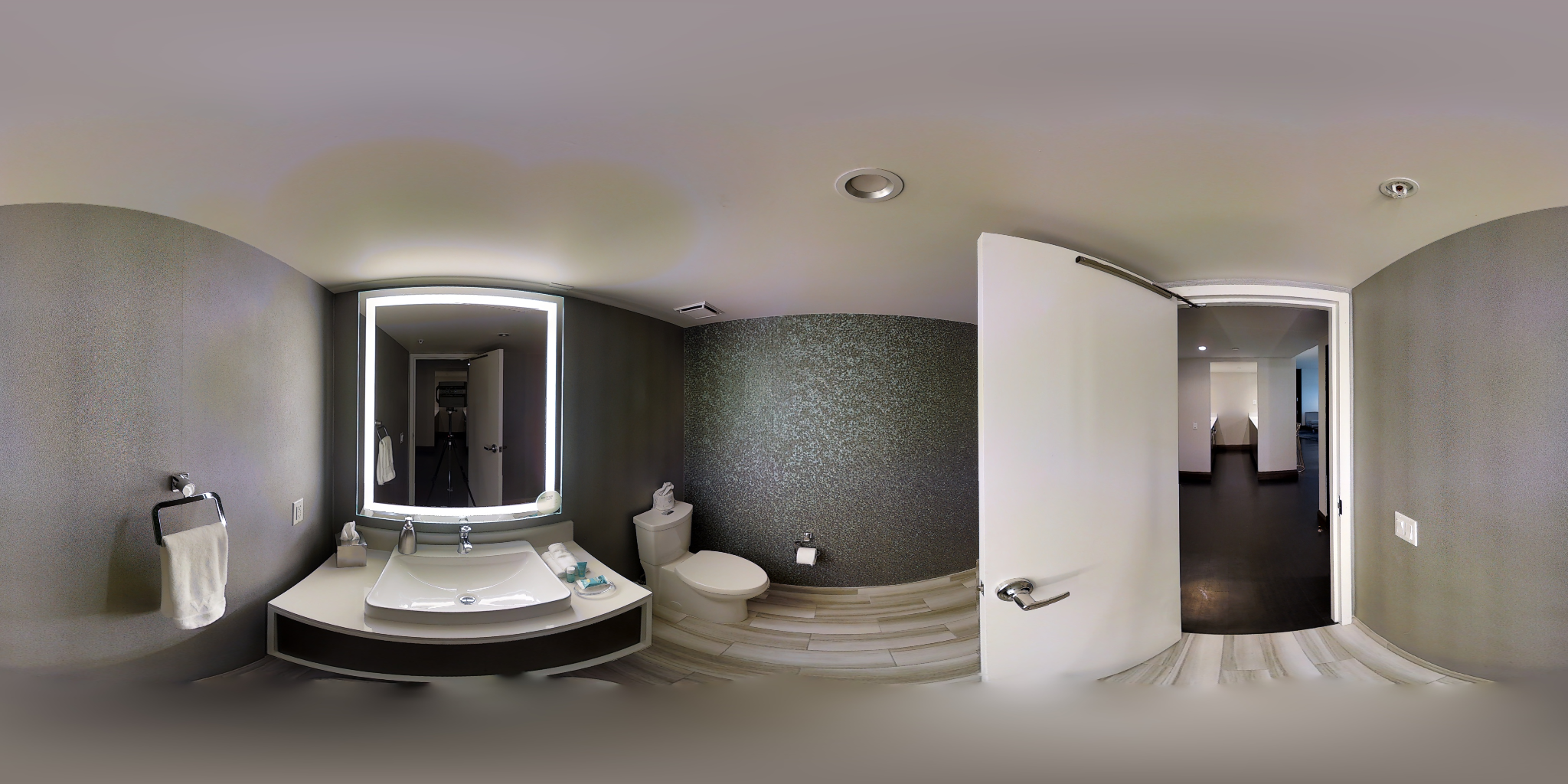}\\ \footnotesize{$360^{\circ}$ Query Scene}};

\node [left of=qu,  node distance=1.4cm, align=center, rotate=90] (B) {\footnotesize{(b) Localization}};
\node [above of=B,  node distance=3.9cm, align=center, rotate=90] (A1) {\footnotesize{(a) Queryable Topological Map Generation}};

\node [right of=qu,  node distance=1.8cm, align=center] (so1) {};

\node [block, right of=so1,  node distance=1.8cm, align=center, fill=cyan!3] (gs) {\footnotesize{$360^{\circ}$ Saliency}\\ \footnotesize{Graph Generation}};
 \draw [draw,->,line width=1.6pt, >=latex] (so1) -- node {} (gs);

\node [right of=gs,  node distance=1.74cm, align=center] (pp2) {};
 \draw [draw,->,line width=1.6pt, >=latex] (gs) -- node [above] {} (pp2);
 
\node [above right =0.1cm and 2.4cm of gs, align=center] (p114) {}; 
\node [left of=p114, node distance=1.8cm, align=center] (p24) {}; 
\node [obnode,fill= Magenta, below of= p24, node distance=1.0cm, align=center] (p34) {}; 
\node [obnode,fill=SpringGreen, below of= p114, node distance=1.2cm, align=center] (p44) {}; 

\node [obnode,fill=cyan, below left = 0.2cm and 0.7cm of p114, align=center] (p554) {}; 
\node [obnode,fill=Thistle, below right = 0.6cm and 0.1cm of p114, align=center] (p64) {}; 


\draw [draw,-] (p34) -- node[sloped, below, color=red, yshift=0.5mm] {} (p44);
\draw [draw,-] (p34) -- node[sloped, below, color=red, yshift=0.5mm] {} (p554);
\draw [draw,-] (p44) -- node[sloped, below, color=red, yshift=0.5mm] {} (p554);
\draw [draw,-] (p554) -- node[sloped, below, color=red, yshift=0.1mm] {} (p64);
\node [below of=p554,  node distance=1.2cm, align=center] (gs1) {\footnotesize{Query Graph}};

\node [block, fill=gray!10, right of=gs,  node distance=5.8cm, align=center] (cg) {\footnotesize{Scene}\\ \footnotesize{Localization} };

\node [right of=gs,  node distance=4.1cm, align=center] (gs1) {};
 \draw [draw,->,line width=1.6pt, >=latex] (gs1) -- node [above] {} (cg);
 
\node [above of=cg,  node distance=1.4cm, align=center] (pta) {\footnotesize{Queryable Topological Map $T$}};

\draw [draw, ->, line width=0.7pt, >=latex, dashed] (pta) ++(0,-0.3) -- node [above] {} (cg);
\node [right of=pta,  node distance=0.15cm, align=center] (ptaa) {};
\draw [draw, ->, line width=0.7pt, >=latex, dashed] (cg) ++(0.15,0.4)-- node [above] {} (ptaa) ++(0,-0.3);



\node [right of=cg,  node distance=2.4cm, align=center] (cate) {\footnotesize{Current}\\ \footnotesize{Location $\mathcal{V}_g$:} \\ \footnotesize{`scene 2'} };
 \draw [draw,->,line width=1.6pt, >=latex] (cg) -- node {} (cate);

\node [block, fill=gray!10, right of=cg,  node distance=4.5cm, align=center] (ps) {\footnotesize{Positioning} };

 \draw [draw,->,line width=1.6pt, >=latex] (cate) -- node [above] {} (ps);
 
\node [right of=ps,  node distance=1.3cm, align=center] (ps1) {};
 \draw [draw,->,line width=1.6pt, >=latex] (ps) -- node [above] {} (ps1);

 \node [right of=ps1,  node distance=0.3cm, align=center] (ps2) {\footnotesize{Pose}}; 
 
\node [obnode,fill=Magenta, below right =0.2cm and 1.8cm of qu, align=center] (o1) {};
\node[above left=-0.65cm and -0.42cm of o1.north west] (l2) {\textbf{\tiny{sink,}}};
\node [obnode,fill=White, right of=o1, node distance=0.6cm, align=center] (o2) {};
\node[above left=-0.65cm and -0.4cm of o2.north west] (l3) {\textbf{\tiny{toilet,}}};
\node [obnode,fill=Gray, right of=o2, node distance=1.1cm, align=center] (o3) {};
\node[above left=-0.65cm and -0.72cm of o3.north west] (l4) {\textbf{\tiny{bathroomvanity,}}};
\node [obnode,fill=SpringGreen, right of=o3, node distance=1.1cm, align=center] (o4) {};
\node[above left=-0.65cm and -0.4cm of o4.north west] (l4) {\textbf{\tiny{mirror}}};

\node [obnode,fill=cyan, right of=o4, node distance=1.1cm, align=center] (o5) {};
\node[above left=-0.65cm and -0.4cm of o5.north west] (l4) {\textbf{\tiny{handtowel}}};
\node [obnode,fill=Thistle, right of=o5, node distance= 0.9cm, align=center] (o6) {};
\node[above left=-0.65cm and -0.38cm of o6.north west] (l4) {\textbf{\tiny{door}}};

\end{tikzpicture}
\vspace{-0.8cm}
\caption{ Overall framework of the proposed method for \textit{Queryable Topological Map Generation}, and \textit{Scene Localization}. For \textit{Queryable Topological Map Generation}, a building is processed to extract the floor map (a 2D layout of the building), the region semantic mask (a segmentation mask labeling larger semantic regions such as rooms and hallways), and the object semantic mask (a finer-grained segmentation mask labeling individual objects) to construct a queryable topological map $T$ that facilitates the navigation path/topological connection between the scene locations $\mathcal{V}$ (nodes in red and scene transition nodes in blue). At each scene location in $\mathcal{V}$, an orientation-rich $360^{\circ}$ saliency graph $G$ is generated from an RGB-D panorama image as a scene representation. In \textit{Scene Localization}, a query panoramic RGB-D image of a scene is localized within the topological map $T$. For this, the query $360^{\circ}$ saliency graph is constructed from the query image and matched against the precomputed saliency graphs $G$ associated with the map nodes to extract the location with the highest similarity above a minimum threshold. Afterwards, positioning is performed by aligning the $360^{\circ}$ saliency graphs of query and reference scenes to estimate the directions for navigation towards the target destination on the topological map.
}
\label{fig:framework}
\vspace{-5mm}
\end{figure*}

\vspace{-0.1cm}
\section{Proposed Method}
\label{sec:method}
\vspace{-0.1cm}

In this section, we describe the proposed $360^{\circ}$ saliency graph representation of the scene map and its application in localization and navigation. The proposed framework comprises two major stages, namely \textit{Queryable Topological Map Generation}, and \textit{Localization for Navigation} as shown in Fig. \ref{fig:framework}. Firstly, we create a queryable topological map consisting of $360^{\circ}$ saliency graphs as scene representation. Then, subsequently, the generated map is utilized for scene localization to facilitate navigation, as discussed in the following subsections. 

\vspace{-0.2cm}
\subsection{Queryable Topological Map Generation}
\vspace{-0.1cm}
A queryable topological map is constructed by first generating a topological map $T$, where each node $\mathcal{V}$ represents a distinct location (e.g., $3$D points) such as a room, hallway, or intersection/transition, and edges represent navigable connections between these locations. At each node $\mathcal{V}$, the environment is captured via a panoramic RGB-D image and represented by its summary called the $360^\circ$ saliency graph $G$, which provides a $360$-degree understanding of the surrounding space or scene. The queryable map supports interactive querying, enabling tasks such as determining the shortest path between two locations or identifying the current location. The $360^\circ$ saliency graph $G$ offers a compact representation of the scene by extracting meaningful information and removing redundant content within the panoramic view. This enhances the efficiency of querying and localization, making the navigation system more robust, scalable, and efficient. 

\subsubsection{Topological Map}

To create a topological map $T$ of an indoor environment, we segment distinct scene points $r_j$ as nodes for this topological map. 
In addition to scene points $r_j$, we include scene transition points $t_i$ as a part of the topological map $T$. Transition nodes $t_i$ are defined as scene points located on the boundaries where the scene changes on the region semantic mask and lie on obstruction-free areas of the floor map. We detect these scene transition nodes $t_i$ using boundary connectivity algorithm \cite{gonzalez2009digital}. The region semantic mask and floor map are generated using the method given in \cite{kim2022habitatmap}. In addition, object semantic mask\cite{kim2022habitatmap} is utilized to find the obstruction-free area within the scene. We finally construct the topological map  $T=(\mathcal{V}, \mathbf{A})$ as a graph with nodes $\mathcal{V}= \{t_i\} \bigcup \{r_j\}$ and the adjacency matrix $\mathbf{A}$ which contains the topological connection between the nodes in $\mathcal{V}$, estimated using the visibility graph \cite{lee2021visibility} framework. 
Figure \ref{fig:framework} shows an example of a generated topological map for a given building. The figure shows that the connections/edges, defined in $\mathbf{A}$, successfully create the traversable paths within a building without any obstruction. In addition, the relative azimuth angle $\phi$ between the key locations $v_p, v_q \in \mathcal{V}$ of the topological map $T$ is also computed as shown in Fig. \ref{fig:framework}. Next, the scene at each node $v$ of $T$ is encoded as a $360^{\circ}$ saliency graph, as discussed in the next subsection.

\subsubsection{$360^{\circ}$ Saliency Graph }

For a given panorama image at each node $v \in T$, we first detect objects within the scene/image using the method described in \cite{guerrero2020s}. Subsequently, the salient objects are detected using the volumetric saliency \cite{meena2024volumetric}. A saliency graph $G=(\mathcal{N}, \textbf{E}, S)$ is then constructed using salient objects $\{\mathcal{O}_{b}\}_{b=1}^n \in \mathcal{N}$ with labels $\{l_{b}\}_{b=1}^n$ as nodes and the contextual relationship between them as edges $e_{a,b} \in E(a,b)$. The ${S}$ represents the volumetric saliency score of the nodes. The edge (contextual relation) between the salient objects ${O}_{a}$ and ${O}_{b}$ is evaluated based on geometry, i.e., spatial distance $d_{s}(a,b)$ and mean dimensions $l_a$ and $l_b$ of objects ${O}_{a}$ and ${O}_{b}$, respectively, as in \eqref{eq:rpred}. We further enrich the saliency graph with semantic information encoded by assigning a weight $w_{i,j} \in \mathbf{W}$; $w_{i,j}= \sqrt{{S}_{O_a} {S}_{O_b}}$ to each edge as in \cite{meena2024indoor}.
\vspace{-1ex}
\begin{equation}
E(a,b)=
\begin{cases}
 1; & \hspace{0.1ex} \textrm{if } \min(l_{a},l_{b})>d_{s}(a,b)\\
 0; & \hspace{0.1ex} \textrm{Otherwise}
\end{cases}
\label{eq:rpred}
\end{equation}
Next, we further enhance the scene's saliency graph $G$ at node $v \in T$ by adding geometric (orientation) information to make it a $360^{\circ}$ saliency graph $\mathcal{G}$. The orientation information at each node of $360^{\circ}$ saliency graph $G$ is encoded by adding the orientation (Polar angle $\hat{\phi}$, and Azimuthal angle $\theta$) of the object's centroid in the spherical coordinates system with respect to the selected axis at corresponding scene node $v \in T$. While the polar angle $\hat{\phi}$ is computed from up (Z) direction, the azimuthal angle $\theta$ is determined with respect to X direction. Figure \ref{fig:360graph}(A) illustrates the selected reference axis $\bar{l}$ and orientation angles $\theta$ in a $360^{\circ}$ saliency graph representation of a sample scene. 
This paper focuses only on ground-based navigation. Therefore, orientation is based solely on the azimuthal angle $\theta$. The constructed $360^{\circ}$ saliency graphs are then used for localization of the queried scene to facilitate navigation, as discussed in the subsequent subsection. 

\vspace{-0.23cm}
\subsection{Scene Localization and Positioning in Scene for Navigation}
\vspace{-0.13cm}

Each scene in the queryable topological map is represented as a $360^{\circ}$ saliency graph, enabling saliency-based localization. Previous works such as \cite{meena2024volumetric} and \cite{meena2024indoor} have explored indoor localization using saliency cues. The framework in \cite{meena2024volumetric} utilizes visual, spatial, and directional features but lacks structural encoding, resulting in limited performance. Although \cite{meena2024indoor} integrates visual, contextual, and semantic information, but omits high-level geometric cues such as $360^{\circ}$ orientation. We address these gaps by extending \cite{meena2024indoor} with a $360^{\circ}$ scene representation for improved localization. 

\subsubsection{Scene Localization}

The estimated $360^{\circ}$ saliency graphs $G$, as discussed in the previous subsection, are utilized for the localization in the scene map $T$ using saliency graph-matching \cite{meena2024indoor}. A panorama query scene of an unknown location $v_q$ is first converted to the query saliency graph as $G^q=(\mathcal{N}^q, \textbf{E}^q, {S}^q)$ and subsequently, we select the candidate (possible matched) graphs $\mathcal{G}_{\hat{i}}$ from the $360^{\circ}$ saliency graphs of an indoor environment $\mathcal{G}_{i}$. For candidate graphs $\mathcal{G}_{\hat{i}}$ selection, Jaccard similarity index is computed as $\hat{i} =\arg \max_{i} \ \mathcal{J}(G^q,{\mathcal{G}_{i}})$. Instead of strict matching, all graphs ${\mathcal{G}_{i}}$ with the Jaccard index greater than $0.6\times \max(\mathcal{J}(.))$ (check supplementary) are selected as candidate graphs to increase the robustness of the graph-matching method. Next, using a node alignment matrix $\mathbf{A}$, a candidate subgraph $ {{\breve{\mathcal{G}}}}_{\hat{i}} \subseteq  \mathcal{G}_{\hat{i}}$ is extracted by retaining only matched nodes to $G^q$. The matched nodes in matrix $\mathbf{A}$ can be determined by considering the maximum node-wise similarity as $\max(\Psi({\it{f}}_{a},\tilde{{\it{f}}}_{b}))$. 
Here, $\Psi(.)$ is the similarity metric between the feature vectors ${\it{f}}_{a}$ and $\tilde{{\it{f}}_{b}}$ of $a_{th}$ and $b_{th}$ nodes of $G^q$ and $\mathcal{G}_{\hat{i}}$ respectively. A feature vector consists of an object label, saliency score, and feature vector extracted using a `node2vec' algorithm \cite{grover2016node2vec}. After alignment, $G^q$ and ${{\breve{\mathcal{G}}}}_{\hat{i}}$ are matched via edge's triplet matching \cite{meena2024indoor}.  
The triplet matching score $\mathcal{I}_{\hat{i}}$ between $G^q$ and ${{\breve{\mathcal{G}}}}_{\hat{i}}$ is computed as $\mathcal{I}_{\hat{i}}=\sum \sum (\mathbf{W} \circ \mathbf{M})$. Here, $\mathbf{W}$ and $\mathbf{M}$ represent the edge-weight matrix of $360^{\circ}$ saliency graph and a matrix containing matching edges, respectively. The symbol $\circ$ signifies the \textit{Hadamard} product. The final best-matched graph among all the matched candidate subgraphs ${\breve{\mathcal{G}}}_{\hat{i}}$ is determined by $\mathcal{G}_m =\arg \max_{\hat{i}} \ \mathcal{I}_{\hat{i}}$. The best-matched graph's location $v_p$ on the topological map $T$ is assigned to the query graph's location $v_q$.

\subsubsection{Positioning}

The localization step identifies the location of the query scene on the map $T$. However, the position and orientation of the reference points (estimated node locations $v_p$ in $T$) in the query graph $G^q$ and matched scene candidate graph $\mathcal{G}_m$ do not get aligned as illustrated in Fig. \ref{fig:deviation}. Thus, we align the reference points $c$ and $c'$ of the matched scene and query scene, respectively, by first estimating the required shift using \textit{Position Estimation}, then followed by \textit{Orientation Alignment} as described below.

\begin{figure}[!ht]
\centering
\begin{tikzpicture}
\node [input, name=input] {};         
\node [left of =input, node distance=1.8cm] (input1) {};

\node [above left=-1.25cm and 1.5cm of input1] (az) {\includegraphics[width=0.64\columnwidth]{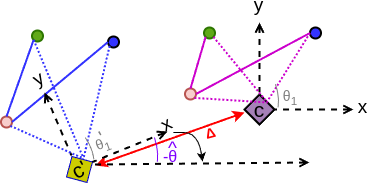}};

\node[above right=-0.8cm and -0.8cm of az] (vec) {\small{$\vec{v}{_1}$}};
\node[above right=-0.8cm and -2.4cm of az] (vec1) {\small{$\vec{v}{_2}$}};
\node[below right=-1.8cm and -3.5cm of az] (vec2) {\small{$\vec{v}{_3}$}};

\node[above left=-0.9cm and -2.4cm of az] (vec1a) {\small{$\vec{v}{'_1}$}};
\node[above left=-0.9cm and -1.3cm of az] (vec2a) {\small{$\vec{v}{'_2}$}};
\node[above left=-2.0cm and -0.2cm of az] (vec3a) {\small{$\vec{v}{'_3}$}};

\node[above left=-0.3cm and -0.6cm of az] (G1) {\small{${G^q:}$}};
\node[right of=G1, node distance=3.0cm,align=center] (G2) {\small{$\mathcal{G}_m:$}};

\node[dash_blk, draw=black!50, below right=-0.2cm and 2.3cm of G2, inner sep=1.0pt, align=left] (legend) {\tiny{$G^q$: Query Graph}\\ \tiny{$\mathcal{G}_m$: Matched Graph}\\ \tiny{$c \ \& \ c'$: Reference points}\\ \tiny{$\Delta$: Shift b/w $c \ \& \ c'$}\\ \tiny{$\theta \ \& \ \theta'$: Orientation angle}\\ \tiny{$\hat{\theta}$: Orientation alignment angle}   };

\end{tikzpicture}
\vspace{-0.4cm}
\caption{Alignment of query graph $G^q$ and matched candidate graph $\mathcal{G}_m$ by estimating shift $\Delta$ and orientation using underlying geometry of reference points ($c$, $c'$) and matched salient objects. } 
\label{fig:deviation}
\vspace{-4mm}
\end{figure}

\vspace{1ex}
\noindent \textbf{A. Position Estimation:}
The reference points $c'$ and $c$ for the query scene graph $G^q$ and matched scene candidate graph $\mathcal{G}_m$, respectively, form a special Euclidean group $SE(3)$. The shift $\Delta$ between reference points $c$ and $c'$ can be estimated using geometric properties of the nodes in the matched scene candidate graph for matched node pair positions $(\vec{v}_i, \vec{v}_i')$ in the $360^{\circ}$ saliency graphs. Theorem 1 describes a framework to estimate the least-square estimate for the shift $\hat{\Delta}$. 

\vspace{-0.2cm}
\begin{thm}
Given the two structures $G_1 := \{\vec{u}_i\}_{i=1}^N $ and $G_2 := \{\vec{v}_i\}_{i=1}^N$ with $N$ components form a special Euclidean group $SE(M)$ with corresponding components pair location $(\vec{u}_i, \vec{v}_i)$. The least square $L2$ estimate of shift $\hat{\Delta}$ between the structure $G_1$ and $G_2$ such that the shifted structure $\tilde{G}_1$ and $G_2$ form a special Orthogonal group $SO(M)$ is given by \eqref{eq:shift}, where $L = \binom{N}{2}$.
\vspace{-0.1cm}
\begin{equation}
        \hat{\Delta} = \frac{1}{2} \mathbf{B}^\dag b,  \quad \mathbf{B}  = \begin{bmatrix}
            \hdots & (\vec{u}_i - \vec{u}_j)  & \hdots
        \end{bmatrix}^T_{M \times L} 
        \label{eq:shift}
\end{equation}

\vspace{-1.0ex}

\scalebox{1.0}{
and $\quad   b  = \begin{bmatrix}
            \vdots \\ ||\vec{u}_i||_2 - ||\vec{u}_j||_2 -  ||\vec{v}_i||_2 + ||\vec{v}_j||_2 \\ \vdots
        \end{bmatrix}_{L \times 1} $
}

\end{thm}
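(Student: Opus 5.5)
The plan is to turn the geometric requirement into a linear system in $\hat{\Delta}$ and then apply ordinary least squares. Suppose $G_1$ and $G_2$ are related by a rigid motion, so that $\vec{v}_i = \mathbf{R}(\vec{u}_i - \Delta)$ for some $\mathbf{R}\in SO(M)$. Translating $G_1$ by $-\Delta$ gives $\tilde{G}_1=\{\vec{u}_i-\Delta\}$, and this is related to $G_2$ by a pure rotation about the common reference point. The rotation $\mathbf{R}$ is unknown, so I will not estimate it. Instead I use the only rotation-invariant consequence of this relation, which is the preservation of norms about the origin:
\begin{equation*}
\|\vec{u}_i-\Delta\|_2 = \|\vec{v}_i\|_2, \qquad i=1,\dots,N .
\end{equation*}

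\textbf{Step 1: square and expand.} Squaring each constraint gives
\begin{equation*}
\|\vec{u}_i\|_2^2 - 2\,\vec{u}_i^T\Delta + \|\Delta\|_2^2 = \|\vec{v}_i\|_2^2 .
\end{equation*}
This is quadratic in $\Delta$ because of the term $\|\Delta\|_2^2$. That term is the same for every $i$, however.

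\textbf{Step 2: eliminate the quadratic term.} Subtract the equation for $j$ from the equation for $i$, for every pair $i<j$. This gives the $L=\binom{N}{2}$ linear equations
\begin{equation*}
(\vec{u}_i-\vec{u}_j)^T\Delta = \tfrac12\big(\|\vec{u}_i\|_2^2-\|\vec{u}_j\|_2^2-\|\vec{v}_i\|_2^2+\|\vec{v}_j\|_2^2\big).
\end{equation*}
Stack the rows $(\vec{u}_i-\vec{u}_j)^T$ into $\mathbf{B}$ and the right-hand sides, without the factor $\tfrac12$, into $b$. The system then reads $\mathbf{B}\Delta=\tfrac12 b$. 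Two remarks on matching the statement:
\begin{itemize}
\item The entries of $b$ should be read as squared norms; the derivation produces squares, not first powers.
\item Acting on $\Delta\in\mathbb{R}^M$, the matrix $\mathbf{B}$ is $L\times M$. The subscript $M\times L$ in the statement refers to the transpose before stacking.
\end{itemize}

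\textbf{Step 3: least squares.} With noisy correspondences the system is overdetermined and inconsistent. I would therefore minimise $\|\mathbf{B}\Delta-\tfrac12 b\|_2^2$. The normal equations $\mathbf{B}^T\mathbf{B}\Delta=\tfrac12\mathbf{B}^T b$ give $\hat{\Delta}=\tfrac12(\mathbf{B}^T\mathbf{B})^{-1}\mathbf{B}^T b=\tfrac12\mathbf{B}^\dag b$. In the rank-deficient case, the Moore–Penrose pseudoinverse gives the minimum-norm least-squares solution.

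\textbf{Main obstacle: the meaning of ``least square estimate''.}
\begin{itemize}
\item Well-posedness: the inverse requires $\operatorname{rank}\mathbf{B}=M$. This holds exactly when the differences $\vec{u}_i-\vec{u}_j$ span $\mathbb{R}^M$, that is, when the matched salient objects are not contained in a common affine hyperplane. In particular it needs $N\ge M+1$. I would state this as an explicit hypothesis.
\item Differencing and squaring change the noise model. The result is a least-squares estimate for the linearised difference system, not for the original nonlinear residuals $\|\vec{u}_i-\Delta\|_2-\|\vec{v}_i\|_2$.
\end{itemize}
I would be explicit that ``least square $L2$ estimate'' is meant in this linearised sense. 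In the noise-free rigid case the estimate is exact: the true $\Delta$ satisfies every equation, and full rank makes it the unique solution.
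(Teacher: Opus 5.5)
Your proposal is correct and follows the paper's proof essentially step for step. Both use norm invariance of the shifted structure $\{\vec{u}_i-\Delta\}$ under rotation, expand the squares, subtract pairs to cancel $\|\Delta\|_2^2$, stack the $L$ rows into $\mathbf{B}\Delta=\tfrac12 b$, and solve with the Moore--Penrose pseudoinverse. Your reading of $b$ with squared norms matches the paper's own derivation, and the rank condition you add on the $\vec{u}_i-\vec{u}_j$ is a useful hypothesis the paper leaves implicit.
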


\begin{proof} Let $\mathbb{G}$ be a $SE(M)$ group s.t. structures $G_1, G_2 \in \mathbb{G}$. Thus, $G_2 = \mathcal{T}(G_1)$, where transformation $\mathcal{T}(.)$ can be decomposed as rotation $\mathbf{R}$ and translation or shifting $\Delta$ operation as $\mathcal{T} = [\mathbf{R} \ \  \Delta]$. 

Thus, the shifted structure $\tilde{G}_1$ and $G_2$ are related through a rotation transformation only, and, thus $\tilde{G}_1 := \{\vec{u}_i - \Delta \}_{i=1}^N$  and $G_2$ form a special Orthogonal group $SO(M)$. For $SO(M)$, structures will  satisfy the distance invariance property, i.e., $\mathcal{D}(u_i-\Delta) =  \mathcal{D}(v_i)$ under distance function $\mathcal{D(.)}$. For $L2$ distance function:
\vspace{-0.1cm}
\begin{eqnarray}
    ||\vec{v}_i||_2^2  &=&  ||\vec{u}_i-\Delta||_2^2   \quad    \forall (\vec{u}_i, \vec{v}_i)_{i=1}^N \in (G_1, G_2) \\ 
     ||\vec{v}_i||_2^2  &=&  (\vec{u}_i-\Delta)^T(\vec{u}_i-\Delta) \\ 
     ||\vec{v}_i||_2^2  &=&  ||\vec{u}_i||^2_2 - 2 \Delta^T \vec{u}_i +  ||\Delta||_2^2 \label{eq:Dest}
\end{eqnarray}

Taking two component pairs of $(\vec{u}_i, \vec{v}_i)$,  $(\vec{u}_j, \vec{v}_j)$ in \eqref{eq:Dest} and subtracting to reduce the term $||\Delta||_2^2$:
\vspace{-1mm}
\begin{equation}
    ||\vec{v}_i||_2^2 - ||\vec{v}_j||_2^2   =  ||\vec{u}_i||^2_2 -  ||\vec{u}_j||^2_2 - 2 \Delta^T (\vec{u}_i - \vec{u}_j)
    \label{eq:Dform}
\vspace{-1mm}
\end{equation}

For least square estimate of shift $\hat{\Delta}$, on rearranging and stacking \eqref{eq:Dform} all $L = \binom{N}{2}$ component pair, we obtain:
\vspace{-0.05cm}
\begin{equation}
     \underbrace{ \begin{bmatrix}
            \vdots \\ 
          (\vec{u}_i - \vec{u}_j)^T \\ 
          \vdots 
      \end{bmatrix}}_{\mathbf{B}}\Delta = \frac{1}{2}  \underbrace{ \begin{bmatrix} \vdots \\ ||\vec{u}_i||^2_2 -  ||\vec{u}_j||^2_2 - ||\vec{v}_i||_2^2 + ||\vec{v}_j||_2^2  \\ \vdots \end{bmatrix}}_{b}
    \label{eq:Dform1}
\end{equation}
And thus least square $L2$ estimate of the shift,    $\hat{\Delta}$ is obtained using \textit{Moore-Penrose} inverse $\mathbf{B}^\dag$ as $\hat{\Delta} = \frac{1}{2} \mathbf{B}^\dag b$.

\end{proof}

\vspace{-0.2cm}
\begin{remark}
In the case of the $2D$ navigation task in our work, the query scene graph $G^q$ and matched scene candidate graph $\mathcal{G}_m$ form a $SE(2)$ group. And, the least square $L2$ estimate $\hat{\Delta}$ of the shift between the reference points $c'$ and $c$, for $G^q$ and $\mathcal{G}_m$, respectively, is computed with geometric information of matched salient objects as per the Theorem 1. The shifted query graph $\tilde{G}^q$ is utilized for the orientation estimation as described in the next subsection.
\end{remark}

\vspace{0.6ex}
\noindent \textbf{B. Orientation Alignment:} The shifted query scene graph $\tilde{G}^q$ and matched scene candidate graph $\mathcal{G}_m$ belong to the same special orthogonal group $SO(3)$ as reference points $c'$ and $c$ are aligned. The orientation between these graphs is estimated by estimating the difference in orientation $\theta$ between position vectors $(\vec{v}_i', \vec{v}_i)$ of corresponding salient objects, i.e., matched nodes in $\tilde{G}^q$ and $\mathcal{G}_m$, respectively. Rotation framework such as $X_\alpha Y_\beta Z_\gamma$ with rotation transformation $\mathbf{R}(\theta)$ can be utilized on this $SO(3)$ group to compute the angle $\theta = (\alpha, \beta, \gamma)$  as given in \eqref{eq:avgorientation}. 
\vspace{-0.15cm}
\begin{equation}
       \mathbf{P}  =   \mathbf{R}(\theta)  \mathbf{\tilde{P}}
    \label{eq:avgorientation}
    \vspace{-1mm}
\end{equation}

Here, matrices $\mathbf{P}$ and $\mathbf{\tilde{P}}$ are formed by column-wise stacking of position vectors $\vec{v}_i$ and $\vec{v}_i'$ of matched nodes in $\mathcal{G}_m$ and $\tilde{G}^q$, respectively. For the case of $2D$ navigation, the shifted query scene graph and matched scene candidate graph belong to $SO(2)$, and orientation $\theta$ between these graphs can be simply estimated as the difference between angles of corresponding salient objects from the X-axis as given in \eqref{eq:2Dorientation}. 
\vspace{-0.1cm}
\begin{equation}
       \theta   =   \arccos(\hat{v}.e_1) -  \arccos(\hat{v'}.e_1),  
    \label{eq:2Dorientation}
    \vspace{-1mm}
\end{equation}

Here, $e_1$ denotes the unit vector along the X-axis, and $.$ denotes the dot product. We take a saliency weighted average of the estimated orientation difference between all $N$ pairs of matched nodes of shifted query scene graph $\tilde{G}^q$ and matched scene candidate graph $\mathcal{G}_m$ as orientation alignment angle $\hat{\theta}$ to minimize the error as given in \eqref{eq:avgorientation}.  
\vspace{-0.1cm} 
\begin{equation}
        \hat{\theta} = \frac{\sum_{i=1}^{N} S_i  \theta_i}{\sum_{i=1}^{N} S_i}
    \label{eq:avgorientation1}
\end{equation}

Next, we utilize the Floyd-Warshall \cite{triana2018implementation} algorithm to find the shortest paths between current location $v_p$ and target location $v_q$ on the topological map $T$ to facilitate navigation. For navigation, based upon estimated shift $\hat{\Delta}$ \eqref{eq:shift} and orientation alignment $\hat{\theta}$ \eqref{eq:avgorientation1}, the viewer's current orientation $\theta_c$ at $v_p$ having position $c=(X_c,Y_c)$ is utilized to orient towards the next node in the shortest path to target location $v_q$ on topological map $T$. The current orientation $\theta_c$ refers to the direction of viewer relative to reference direction at node $v_p$.

\begin{figure}[!hbt]
\vspace{-2.5mm}
\centering
\begin{tikzpicture}
\node [input, name=input] {};         

\node [above right=-1.0cm and -13.9cm of input] (ip1) {\includegraphics[width=0.78\linewidth]{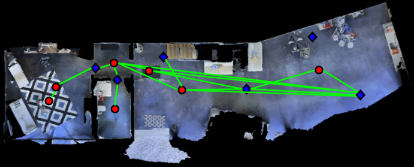}};

\node [below right= -1.1cm and -5.3cm  of ip1, align=center] (tp) {\textcolor{red}{{\textbf{$8$}}}};
\node [below right= -2.5cm and -5.5cm  of ip1, align=center] (tp) {\textcolor{red}{{\textbf{$9$}}}};

\node [below right= -0.16cm and -4.7cm  of ip1, align=center] (tp) {\tiny{Topological Map}};

\node [below of= ip1, node distance=2.3cm, align=center] (node13) {\includegraphics[width=0.25\linewidth]{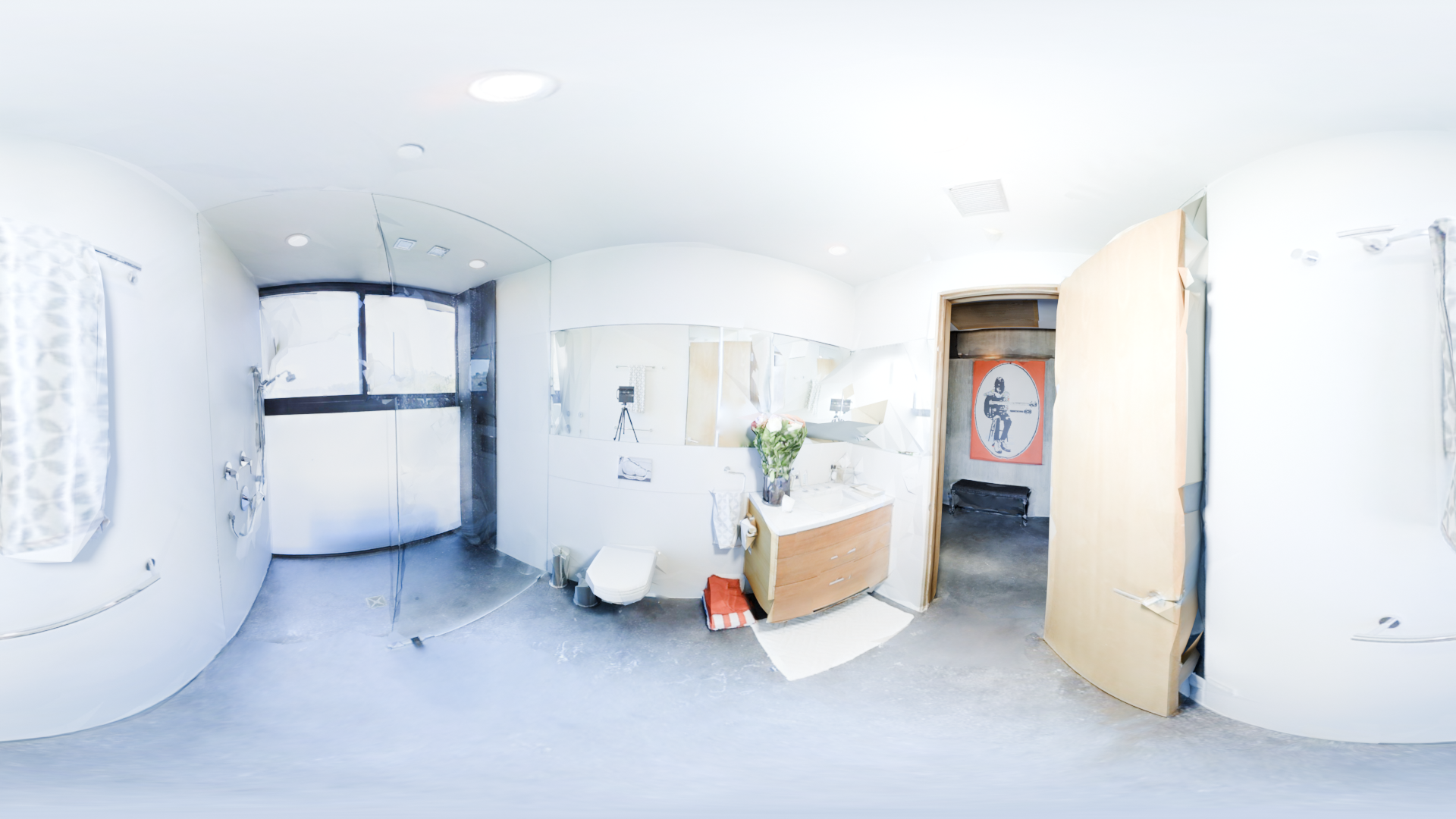} \includegraphics[width=0.25\linewidth]{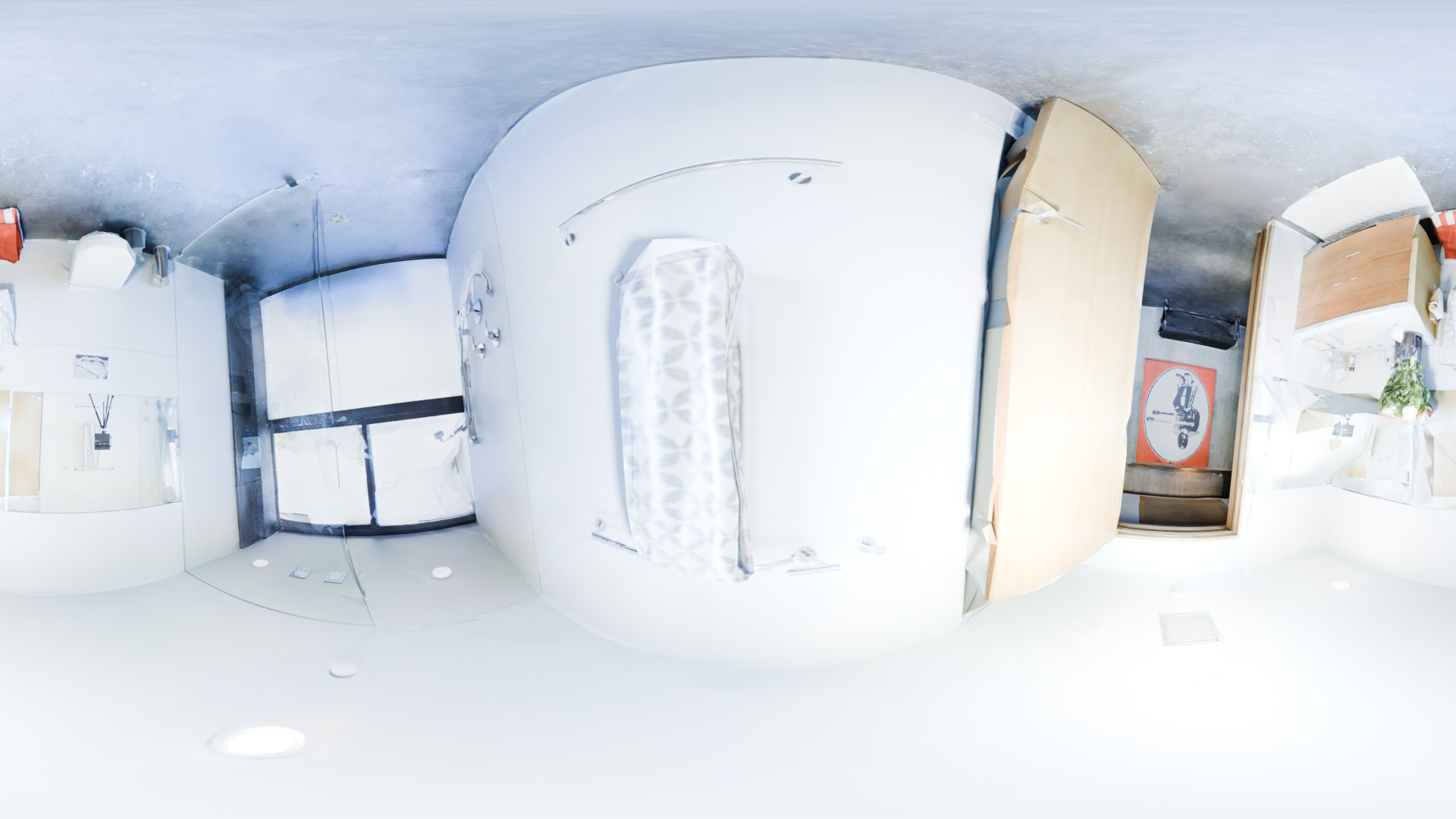} \includegraphics[width=0.25\linewidth]{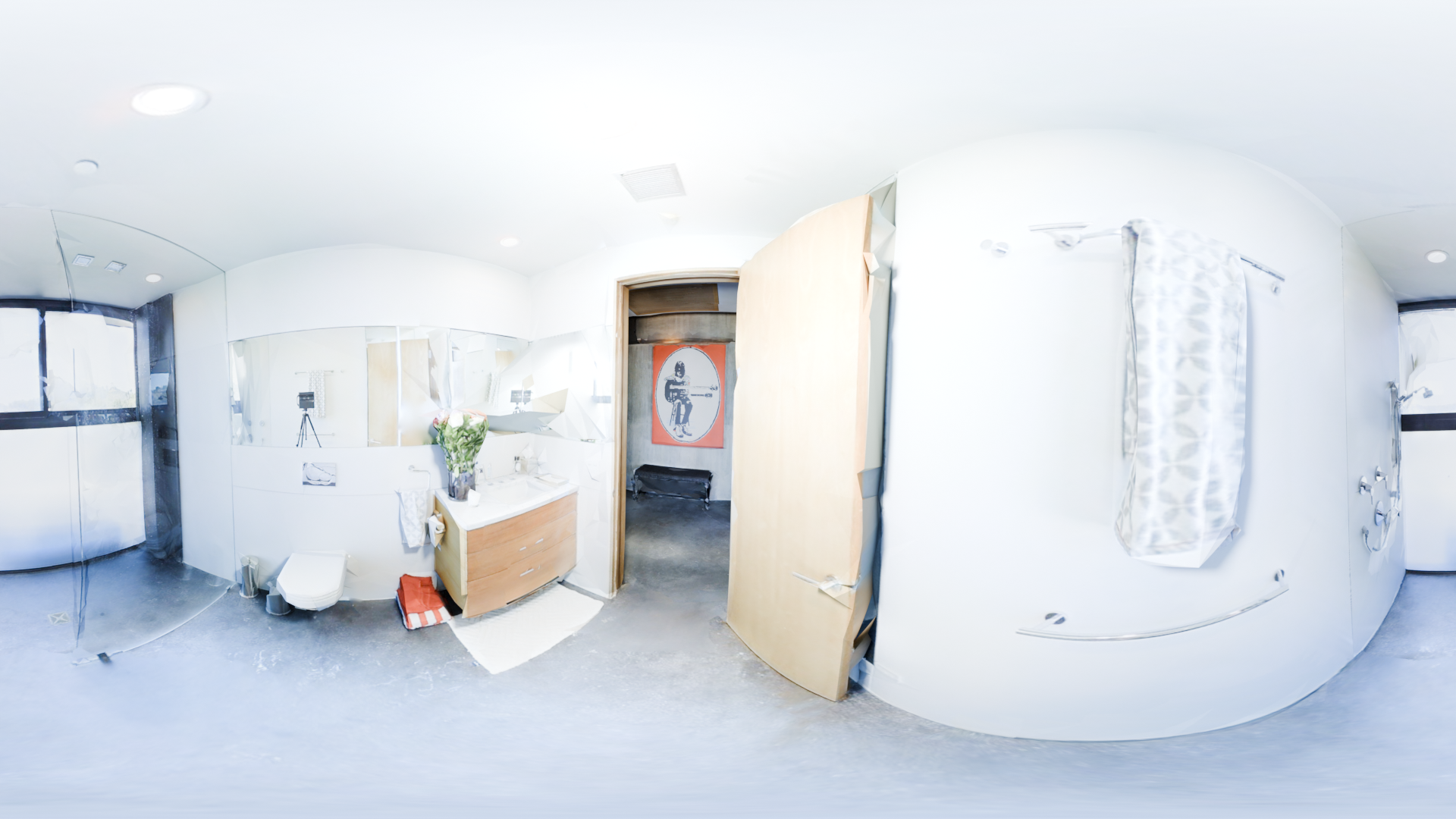}} ;

\node [below left= -0.2cm and -7.6cm  of node13, align=center] (cu) {\tiny{original scene at $v_{8}$} \hspace{0.5cm} \tiny{orientation difference$=22.4$} \hspace{0.4cm} \tiny{orientation difference$=7.9$} };

\node [below of= node13, node distance=1.5cm, align=center] (9_1) {\includegraphics[width=0.25\linewidth]{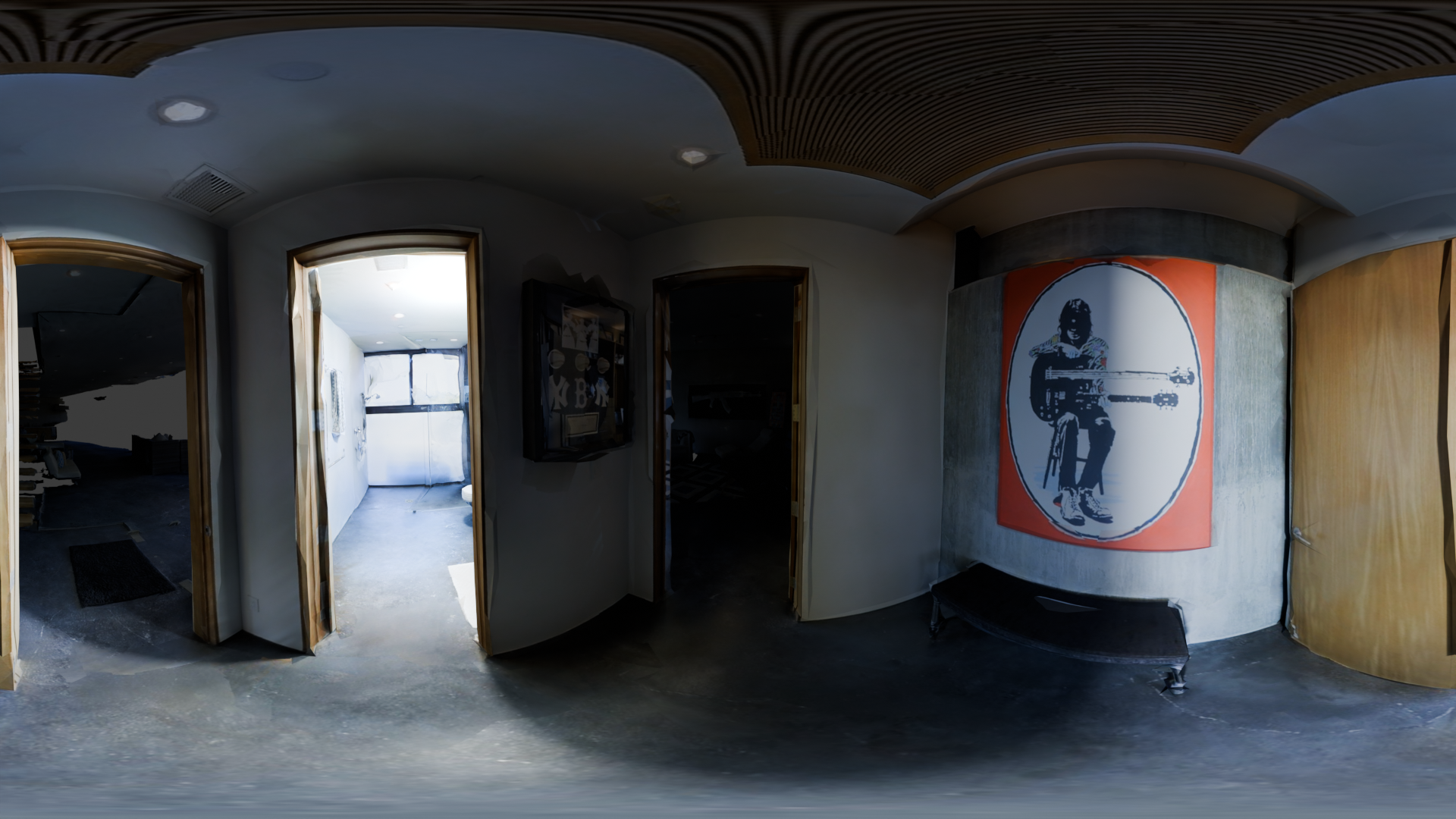} \includegraphics[width=0.25\linewidth]{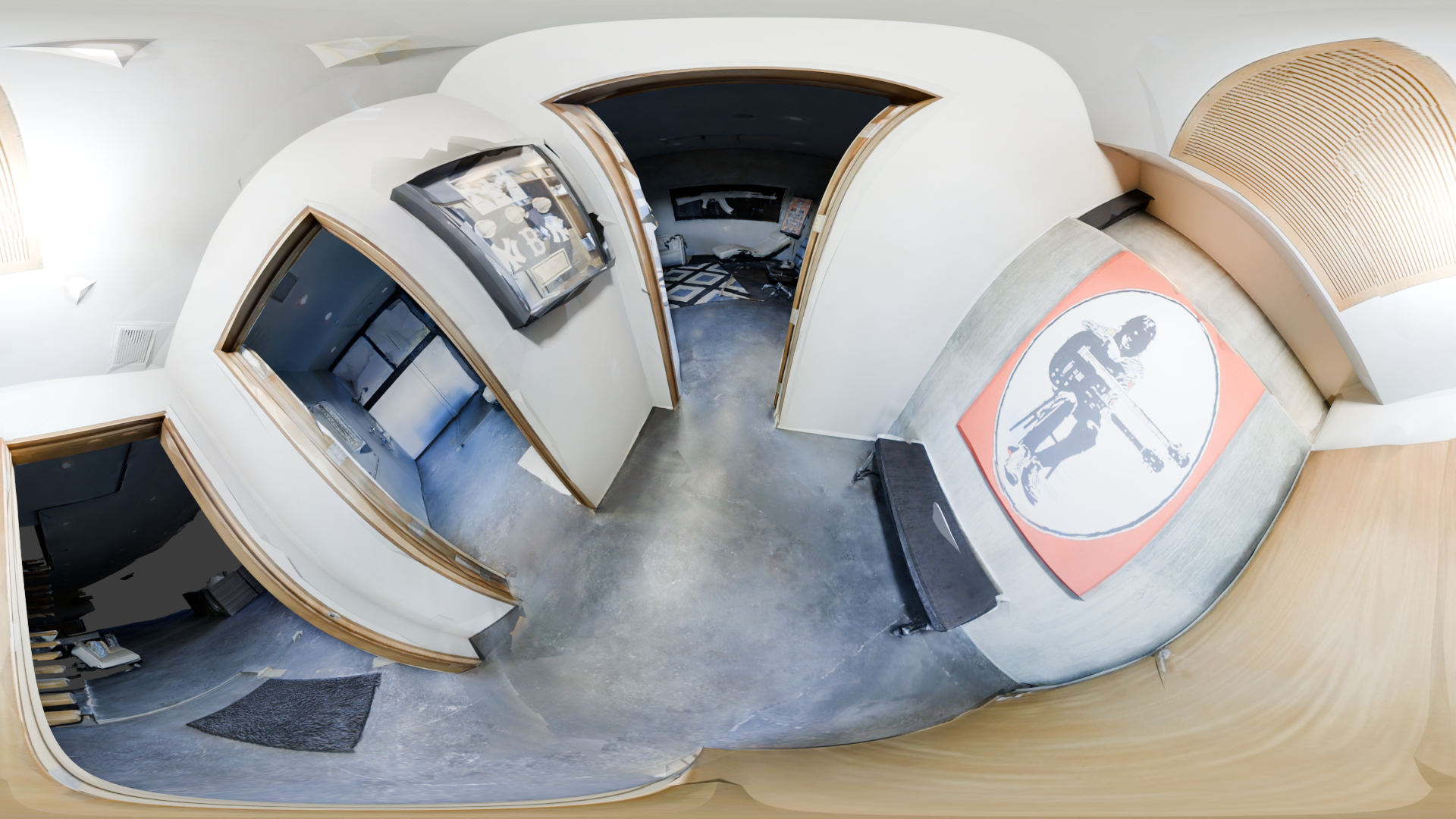} \includegraphics[width=0.25\linewidth]{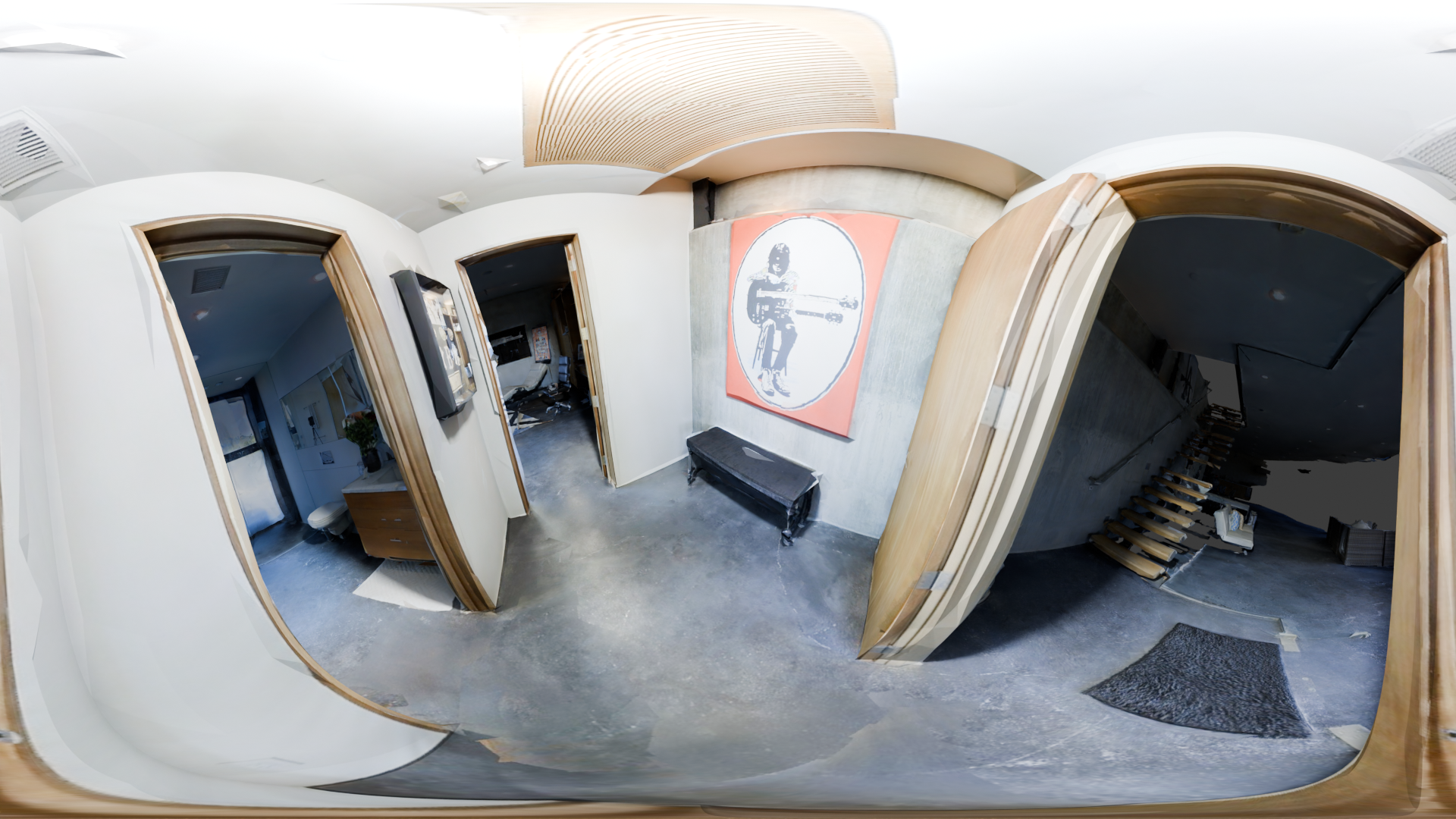}} ;

\node [below left= -0.2cm and -7.6cm  of 9_1, align=center] (cu1) {\tiny{original scene at $v_{9}$} \hspace{0.5cm} \tiny{orientation difference$=13.8$} \hspace{0.4cm} \tiny{orientation difference$=55.8$} };



\end{tikzpicture}
\vspace{-0.5cm}
\caption{Sample panorama images for different viewer positions and orientation difference ($\theta_e$).} 
\label{fig:orien_err}
\vspace{-4.5mm}
\end{figure}


The proposed positioning approach is evaluated for $SE(3)$ transformations. Panorama images are rendered at each original (ground truth) position $c$, as defined by the topological map, and at shifted position $c'$, obtained by applying rotations and translations. Figure \ref{fig:orien_err} presents a few sample panorama images, rendered at ground truth and shifted position using `Blender'\footnote{https://www.blender.org/} (see supplementary), showing the effect of the change in orientation with respect to the viewer position. The estimated translational shift $\hat{\Delta}$ \eqref{eq:shift} and orientation alignment $\hat{\theta}$ \eqref{eq:avgorientation1} are then used to compute the positional error $E_p$ and orientation error $E_\theta$ between the original and shifted images. Table \ref{tab:errinse3} presents the mean absolute error in position $E_p$ and orientation $E_\theta$ for $105$ test images. The results indicate that query scenes captured with a mean positional error of $0.6995$m and an orientation error of $16.59^0$ were used for evaluation. This demonstrates the proposed method’s capability to support efficient navigation despite variations in rotations and translations. As presented in Section \ref{sec:results}, Table \ref{tab:error}, the evaluation indicates that the proposed method can effectively handle pose variations, making it well-suited for navigation scenarios involving $SE(3)$ transformations.  

\begin{table}[!hbt]
    \centering
    \vspace{-2mm}
    \caption{Mean position ($E_{p}$) and orientation ($E_{\theta}$) errors, computed between shifted and original/ground-truth poses. }
    \label{tab:errinse3}
    \vspace{-0.22cm}
     \renewcommand{\arraystretch}{0.8}
\begin{tabular}{c|cc}
\hline
\small{\# of instances} & \small{$E_{p}$} & \small{$E_{\theta}$} \\ \hline
\small{$105$} & \small{$0.6995$m} & \small{$16.59^0$} \\
\hline
 \end{tabular}
\vspace{-3.0mm}
\end{table}

The localization and positioning, after which the viewer executes the necessary rotation to be oriented towards the target node, are utilized for vision-based navigation. After every determined movement towards the target,  the localization process is repeated iteratively for each intermediate node for continuous navigation. Figure \ref{fig:nav} shows an example of navigation on a topological map $T$ of a building. As shown in the figure, at each intermediate node, the viewer begins by identifying the orientation angle $\theta_{\text{c}}$ and moves to the next location to reach the destination $v_g$.

\begin{figure*}[!hbt]
\begin{tikzpicture}
\node [input, name=input] {};         

\node [above right=-1.0cm and -13.9cm of input] (ip1) {\includegraphics[width=0.97\linewidth]{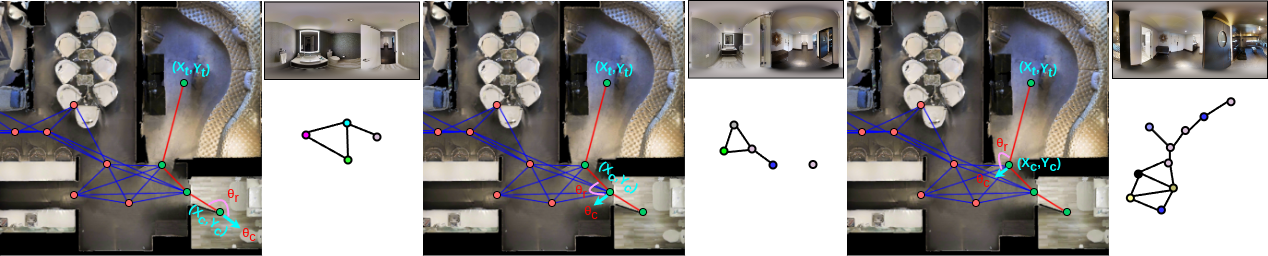}};

\node [below right= -2.65cm and -2.3cm  of ip1, align=center] (cu) {\footnotesize{Query Scene}};
\node [left of= cu, node distance=5.8cm, align=center] (cu1) {\footnotesize{Query Scene}} ;
\node [left of= cu1, node distance=5.8cm, align=center] (cu2) {\footnotesize{Query Scene}} ;

\node [below of= cu, node distance=2.0cm, align=center] (cg) {\footnotesize{Query Graph}} ;
\node [below of= cu1, node distance=1.9cm, align=center] (cg1) {\footnotesize{Query Graph}} ;
\node [below of= cu2, node distance=1.9cm, align=center] (cg2) {\footnotesize{Query Graph}} ;

\node [below of= ip1, node distance=2.2cm, align=center] (iplab) {\includegraphics[width=0.48\linewidth]{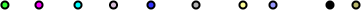}} ;
\node [below right =0.01cm and -9.4cm of iplab, align=center] (label) {\tiny{\quad\quad mirror, \ \quad sink, \quad\quad handtowel, \qquad door, \quad\quad  table, \quad\quad bathroomvanity, \quad\quad sofa, \quad kitchen counter, \quad\quad tv, \quad kitchen cabinet}};

\end{tikzpicture}
\vspace{-0.4cm}
\caption{An example of navigation on topological map. 
First, determine current orientation $\theta_c$ at the current location $(X_c,Y_c)$ using the saliency graph after performing localization. Next, the rotation angle $\theta_r$ between the direction of the topological path and $\theta_c$ is calculated after performing alignment using positioning. The navigation algorithm then adjusts the heading based on this $\theta_r$, ensuring accurate movement towards the next target location $(X_t,Y_t)$ on the shortest path algorithm.
} 
\label{fig:nav}
\vspace{-5mm}
\end{figure*}

\vspace{-0.17cm}
\section{Experiments}
\label{sec:results}
\vspace{-0.13cm}

This section presents the performance analysis of the proposed $360^{\circ}$ saliency graph-based scene localization and navigation method and compares it with state-of-the-art methods.

\noindent \textbf{Implementation Details:} Our experimental setup involves {\it{MATLAB}} {\it{R2022a}} in {\it{Windows 10}} environment with an `Intel Xeon' CPU having $64$ GB RAM. For evaluation, we have utilized \textit{Matterport3d} \cite{chang2017matterport3d} dataset, containing a total of $500$ scenes from $28$ buildings. Each building contains $5$ to $41$ scenes or rooms. Further, we have also utilized `Blender' to get new images of different configurations for robustness evaluation. 

\vspace{-0.22cm}
\subsection{Performance Comparison:}
\vspace{-0.13cm}
We have compared the performance of the proposed method in terms of Localization accuracy and Navigation accuracy, as discussed in the following subsections.

\vspace{0.6ex}
\noindent \textbf{I. Localization Performance:}
We compare the localization performance of the proposed method in terms of accuracy with conventional matching methods including detector-based matching \cite{detone2018superpoint}, \cite{zhao2022alike}, \cite{luo2020aslfeat}, \cite{sarlin2020superglue} and detector-free matching \cite{chen2022aspanformer}, \cite{tang2022quadtree}, \cite{sun2021loftr}, \cite{jiang2021cotr}, and \cite{zhang2023searching}. These methods perform localization by matching image features. The features can also be obtained by detecting keypoints in the images, and matching is done by establishing correspondences between keypoints in a query image and those in a database image. The database image with the highest number of reliable matches is considered the best match, and its known location is used as the estimated location of the query. To evaluate the matches Mean Matching Accuracy (MMA) metric is used. This metric indicates the proportion of correct matches among all matches. Table \ref{tab:comp} lists the MMA$@i$ in percentage for thresholds $i \in (3)$ obtained for SOTA and the proposed method. We observe from the table that detector-free matching \cite{zhang2023searching}, \cite{chen2022aspanformer}, \cite{tang2022quadtree}, \cite{sun2021loftr}, \cite{jiang2021cotr} achieves better MMA$@i$ than detector-based matching \cite{detone2018superpoint}, \cite{zhao2022alike}, \cite{luo2020aslfeat}, \cite{sarlin2020superglue}. 
In contrast, the proposed method results in a significant improvement in localization accuracy in terms of the MMA$@i$ by introducing saliency graph matching. Thus, this reflects that the  $360^{\circ}$ saliency graph efficiently encodes the relevant visual, semantic, and contextual information of the scene.

\begin{table}[!hbt]
\vspace{-2mm}
    \centering
    \caption{Localization performance comparison in terms of MMA$@3$ with state-of-the-art feature matching-based methods. }
    \label{tab:comp}
      \vspace{-0.2cm}
    \renewcommand{\arraystretch}{1.1}
    \resizebox{\columnwidth}{!}{%
\begin{tabular}{c|c||c|c}
\hline
 Methods  &  MMA@3 &  Methods  &  MMA@3\\ \hline
SP \cite{detone2018superpoint}+NN   & $32.43$ & QuadT \cite{tang2022quadtree} & 41.72\\
SP \cite{detone2018superpoint}+SG \cite{sarlin2020superglue} & $29.95$ & LoFTR \cite{sun2021loftr} & 36.07\\
ASLFeat \cite{luo2020aslfeat}+NN   & $35.79$ & COTR \cite{jiang2021cotr}  & $46.07$ \\
ALike \cite{zhao2022alike}+NN  & $34.55$ & GT+SGAM\_COTR \cite{zhang2023searching} & $49.82$ \\
ASpan \cite{chen2022aspanformer} & $37.25$ & Proposed  & $\mathbf{87.8}$\\
\hline
 \end{tabular}
}
{\tiny{(NN: Neural Network)}}
\vspace{-3mm}
\end{table}

We also compare the localization performance of the proposed method with recent scene localization methods, including \cite{kim2023ldl}, \cite{kim2024fully}, \cite{qiao2023march}, \cite{xie2024mfp}, and \cite{meena2024volumetric} in Tab. \ref{tab:loccomp} in terms of Accuracy $Acc=$(\# \text{Correctly Localized Queries})/(\# \text{Total Test Queries}). The table shows that the method LDL \cite{kim2023ldl}, which utilized the distribution of lines with $2$D and $3$D line distance functions, achieves the lowest localization \textit{Accuracy} among all. FGPL \cite{kim2024fully} further increases the localization performance by incorporating the geometry of $2$D-$3$D lines. MiC \cite{qiao2023march} shows superior performance compared to LDL \cite{kim2023ldl} and FGPL \cite{kim2024fully} by incorporating CLIP \cite{radford2021learning} features.
The method MFP-CNN \cite{xie2024mfp} shows an increment in the value of \textit{Accuracy} using multi-scale feature fusion. On including the salient object's volumetric information for localization as in SOVS \cite{meena2024volumetric}, the \textit{Accuracy} increases by $2\%$. The localization using the proposed $360^{\circ}$ saliency graph representation further enhances the accuracy by $4\%$ by leveraging visual, contextual, and semantic information.

\begin{table}[!hbt]
\vspace{-2.5mm}
    \centering
    \caption{Comparison of mean accuracy for scene localization. }
    \label{tab:loccomp}
     \vspace{-0.24cm}
    \renewcommand{\arraystretch}{1.1}
    \resizebox{0.99\columnwidth}{!}{%
\begin{tabular}{c|ccccccc}
\hline
Methods &  LDL\cite{kim2023ldl}  & FGPL\cite{kim2024fully} & MiC\cite{qiao2023march} & MFP-CNN\cite{xie2024mfp} & SOVS\cite{meena2024volumetric} & Proposed  \\  \hline
\textit{Acc} ($\uparrow$)  & $0.35$ & $0.71$ & $0.78$ &$0.80$ & $0.82$ & $\mathbf{0.86}$\\
\hline
 \end{tabular}
}
\vspace{-3.0mm}
\end{table}

We additionally present a qualitative comparison of the proposed $360^{\circ}$ saliency graph-based scene localization method against methods proposed in \cite{kim2023ldl}, \cite{kim2024fully}, \cite{qiao2023march}, \cite{xie2024mfp}, and \cite{meena2024volumetric}. Table \ref{fig:sceneclass} shows the comparison results for a few test images obtained using SOTA and the proposed method. The ground truth label \textit{BiSj} denotes the location or input taken from the $j_{th}$ scene of the $i_{th}$ building. We observe that the proposed method outperforms others as it considers the salient objects as nodes (visual), saliency score (node semantics), weighted triplet (edge semantics), contextual, and geometric information of the complete surrounding (i.e., $360^{\circ}$) scene.

\begin{table}[!hbt]
    \centering
\vspace{-2.5mm}
    \caption{Sample results for scene localization.}
     \label{fig:sceneclass}
     \vspace{-0.22cm}
     \resizebox{0.98\columnwidth}{!}{%
     \begin{tabular}{lllllll}
\hline
\multirow{2}{*}{\begin{tabular}[c]{@{}l@{}} $360^{\circ}$ scenes with labels\end{tabular}} & \multicolumn{5}{c}{Predicted scene labels} \\
 {} & \begin{tabular}[c]{@{}l@{}} {LDL}\\ \cite{kim2023ldl} \end{tabular} & \begin{tabular}[c]{@{}l@{}}{FGPL}\\ \cite{kim2024fully} \end{tabular} & \begin{tabular}[c]{@{}l@{}}{MiC}\\ \cite{qiao2023march} \end{tabular} & \begin{tabular}[c]{@{}l@{}} MFP-\\CNN \cite{xie2024mfp} \end{tabular} & \begin{tabular}[c]{@{}l@{}} {SOVS}\\ \cite{meena2024volumetric} \end{tabular} & {Proposed} \\
\hline
\\
 \begin{tabular}[c]{@{}l@{}}
\begin{minipage}[b]{0.4\linewidth}
 \begin{overpic}[width=\textwidth]{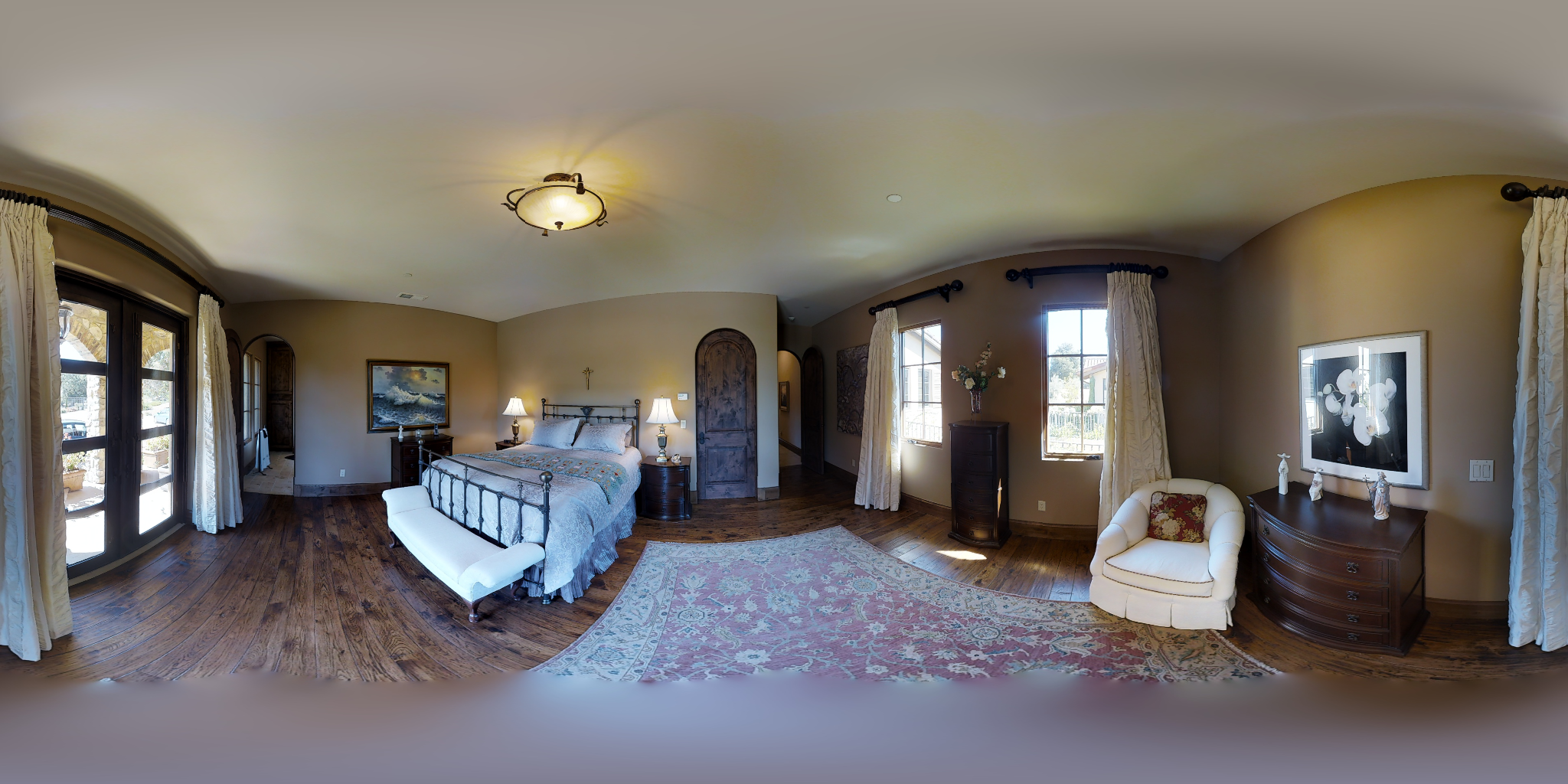}
 \put (2,40) {\fcolorbox{black}{black!15}{\color{blue}{\textbf{B31S22}}}}
 \end{overpic}
 \end{minipage}
 \end{tabular} &  \begin{tabular}[c]{@{}l@{}}B31S26\end{tabular}  &   \begin{tabular}[c]{@{}l@{}}B31S22\end{tabular}  &  \begin{tabular}[c]{@{}l@{}}B31S26\end{tabular}  & \begin{tabular}[c]{@{}l@{}}B31S22\end{tabular}  &  \begin{tabular}[c]{@{}l@{}}B31S25\end{tabular}  &\begin{tabular}[c]{@{}l@{}}B31S22\end{tabular}  \\

\begin{tabular}[c]{@{}l@{}}
\begin{minipage}[b]{0.4\linewidth}
\begin{overpic}[width=\textwidth]{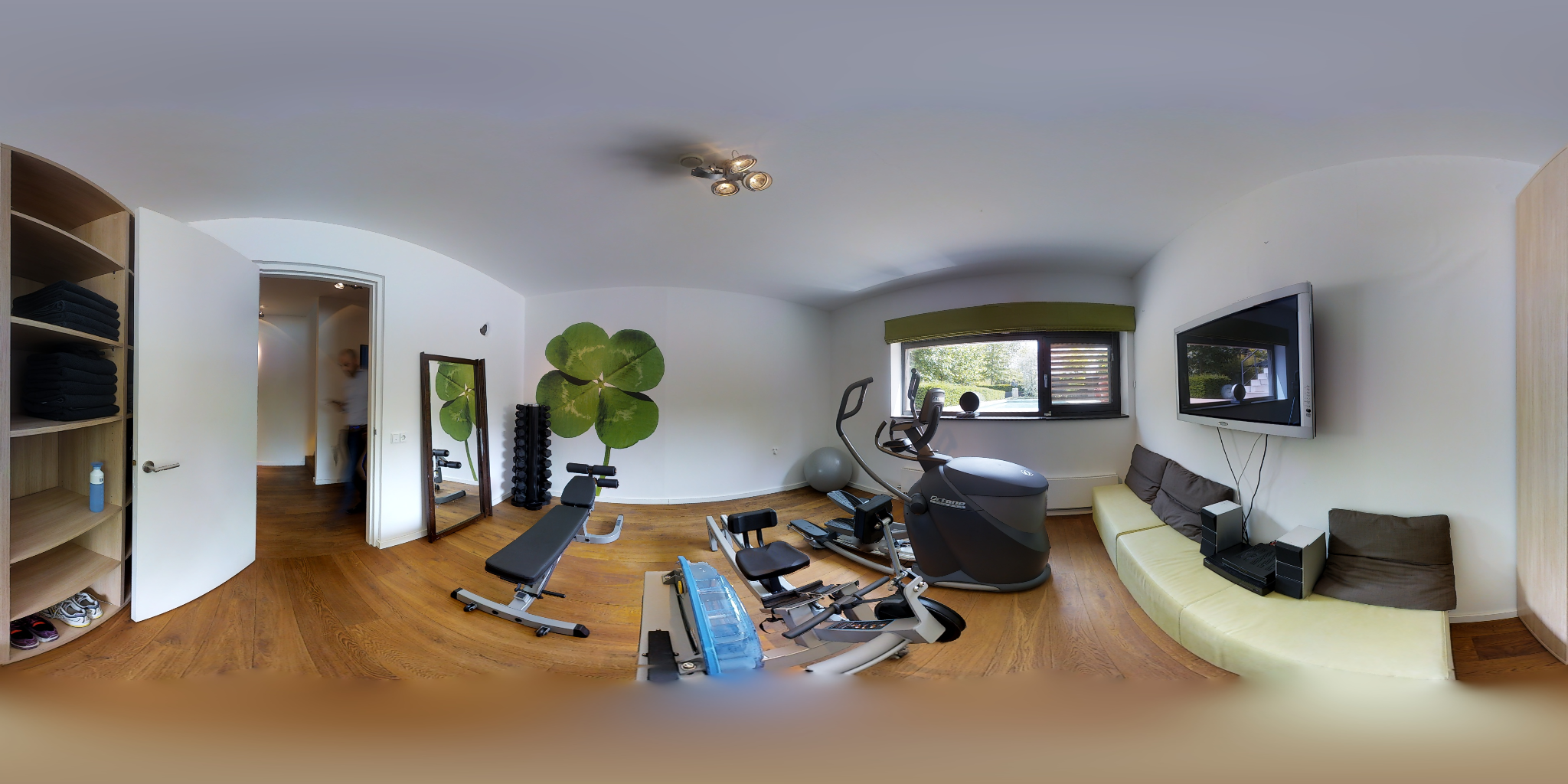}
\put (2,40) {\fcolorbox{black}{black!15}{\color{blue}{\textbf{B2S17}}}}
\end{overpic}
\end{minipage}
\end{tabular} &  \begin{tabular}[c]{@{}l@{}}B2S13\end{tabular}  &   \begin{tabular}[c]{@{}l@{}}B2S11\end{tabular}  & \begin{tabular}[c]{@{}l@{}}B2S17\end{tabular}  & \begin{tabular}[c]{@{}l@{}}B2S17\end{tabular}  &  \begin{tabular}[c]{@{}l@{}}B2S20\end{tabular}  & \begin{tabular}[c]{@{}l@{}}B2S17\end{tabular} \\

\begin{tabular}[c]{@{}l@{}}
\begin{minipage}[b]{0.4\linewidth}
\begin{overpic}[width=\textwidth]{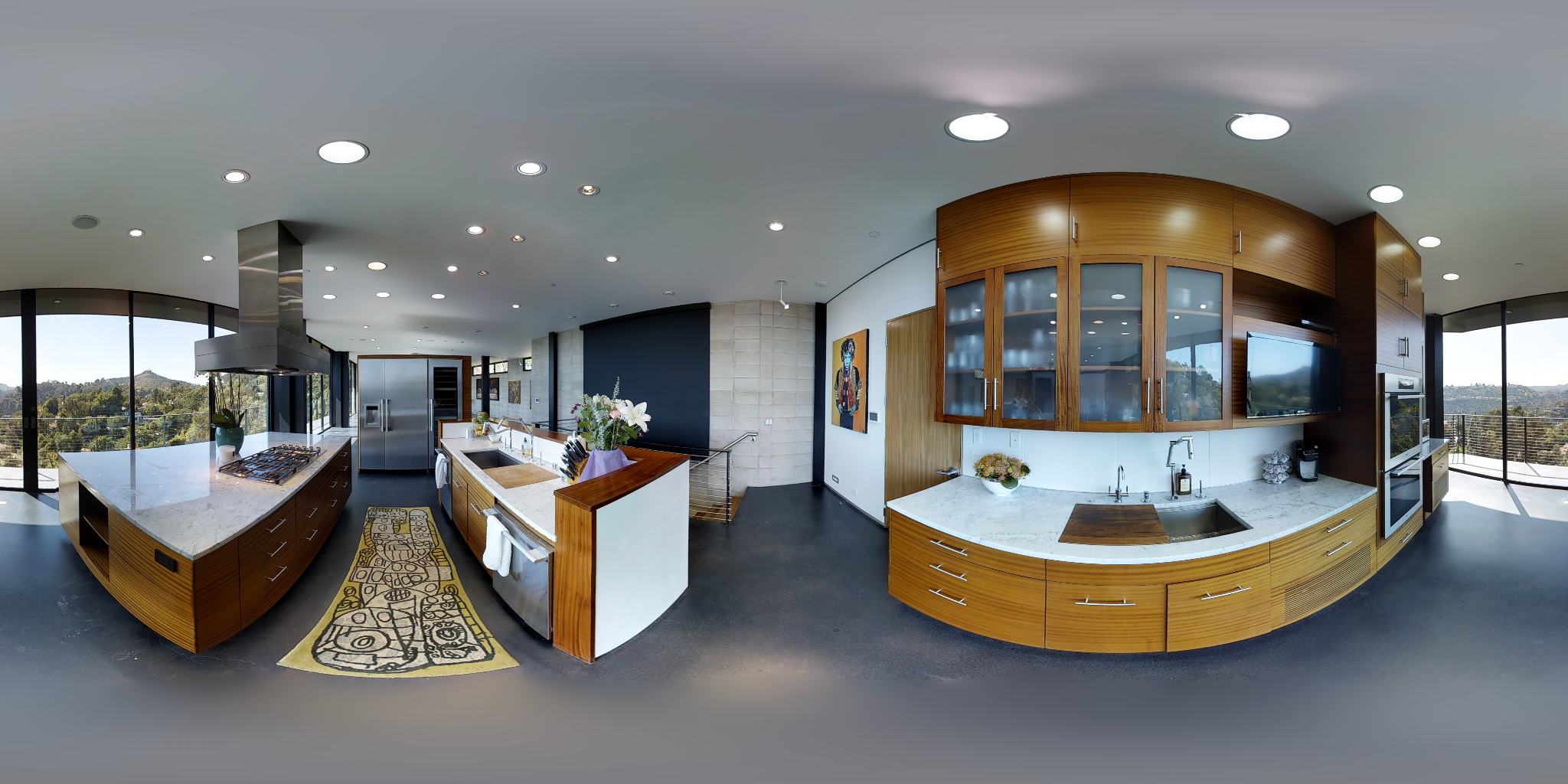}
\put (2,40) {\fcolorbox{black}{black!15}{\color{blue}{\textbf{B10S5}}}}
\end{overpic}
\end{minipage}
\end{tabular} & \begin{tabular}[c]{@{}l@{}}B10S13\end{tabular} &   \begin{tabular}[c]{@{}l@{}}B10S13\end{tabular}  &  \begin{tabular}[c]{@{}l@{}}B10S16\end{tabular}  &  \begin{tabular}[c]{@{}l@{}}B10S10\end{tabular}  &  \begin{tabular}[c]{@{}l@{}}B10S5\end{tabular}  & \begin{tabular}[c]{@{}l@{}}B10S5\end{tabular} \\

 \hline
\end{tabular}%
}
\vspace{-3mm}
\end{table}

Furthermore, we also analyze the robustness of the proposed localization method under varying focal lengths. We have considered a database that contains $360^{\circ}$ saliency graphs for each location, and query scenes are the images captured at different focal lengths. Table \ref{tab:articomp} reports results of the proposed $360^{\circ}$ saliency graph-based localization against a recent method, SOVS \cite{meena2024volumetric}, for $100$ scenes. The table indicates that accuracy decreases for both methods as focal length increases due to a reduction in scene-defining objects at higher magnification. However, the proposed $360^{\circ}$ saliency graph-based localization is much more robust compared to SOVS \cite{meena2024volumetric}. This indicates that incorporating a $360^{\circ}$ field of view improves localization robustness across varying focal lengths. 

 
\begin{table}[!h]
\vspace{-2.5mm}
    \centering
    \caption{Robustness of scene localization accuracy under varied focal length. }
    \label{tab:articomp}
    \vspace{-0.22cm}
\begin{tabular}{c|c|cc}
\hline
Method & $f$ &  $2f$   &  $3f$ \\  \hline

SOVS\cite{meena2024volumetric} & $0.47$  &  $0.36$ & $0.29$\\   
Proposed   & $0.65$  &  $0.50$ & $0.42$\\   
\hline
\end{tabular}
\vspace{-2.7mm}
\end{table}

\vspace{0.6ex}
\noindent \textbf{II. Navigation Performance:} 
To assess the effectiveness of $360^{\circ}$ saliency graph-based localization in `Vision-Language Navigation' (VLN), we perform a comparative evaluation of the proposed approach against several SOTA navigation methods. All methods are evaluated using the same topological map (i.e., a global map) but differ in localization and navigation strategies. The evaluation includes methods such as \cite{wang2019reinforced}, \cite{qi2020reverie}, \cite{gao2021room}, \cite{guhur2021airbert}, \cite{qiao2022hop}, \cite{chen2022think}, and \cite{qiao2023march} on \textit{REVERIE} \cite{qi2020reverie} dataset.
Figure \ref{fig:micpro} presents the navigation framework comprising two stages: saliency graph-based localization and navigation. In the localization stage, a saliency graph representation of the query scenes associated with each location is employed. This approach facilitates localization through a saliency graph matching. Subsequently, navigation is performed using a method in `March-in-Chat' (MiC) \cite{qiao2023march}.
The results are shown in Tab. \ref{tab:navcomp} for evaluation metrics \emph{Success Rate} (SR) \cite{qiao2023march}, \emph{Oracle Success Rate} (OSR) \cite{qiao2023march}, and \emph{Success rate weighted by trajectory Path Length} (SPL) \cite{qiao2023march}. 
The table shows that the `March-in-Chat' (MiC) \cite{qiao2023march} model outperforms SOTA methods for all metrics. The MiC \cite{qiao2023march} model includes a room-and-object aware scene perceiver (ROASP) for predicting room type and the objects visible at the current location using CLIP \cite{radford2021learning}. Replacing room-type prediction (i.e., localization) in ROASP with the proposed $360^{\circ}$ saliency graph-based localization method in MiC \cite{qiao2023march} as shown in Fig. \ref{fig:micpro} results in an increase in the value of metrics OSR, SR, and SPL by $4.41\%$, $2.81\%$, and $5.06\%$, respectively. This demonstrates that $360^{\circ}$ saliency graph-based localization benefits indoor navigation by improving the localization ability of agents.

\begin{figure}[!hbt]
\centering
\vspace{-2mm}
\begin{tikzpicture}
\node [input, name=input] {};         

\node [above right=-1.25cm and 0.5cm of input] (ip1) {\includegraphics[width=0.78\columnwidth, height=1.9in]{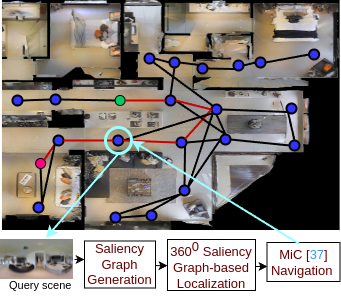}};

\end{tikzpicture}
\vspace{-0.5cm}
\caption{An example of proposed localization + MiC\cite{qiao2023march} navigation method.} 
\label{fig:micpro}
\vspace{-2mm}
\end{figure}

\begin{table}[!hbt]
\vspace{-2.5mm}
    \centering
    \caption{Impact of proposed localization method on navigation performance. }
    \label{tab:navcomp}
    \vspace{-0.24cm}
    \resizebox{\columnwidth}{!}{%
\begin{tabular}{c|ccc}
\hline
Methods & \multicolumn{3}{c}{Navigation Acc.} \\ \cline{2-4}
         & OSR($\uparrow$) & SR($\uparrow$)   &  SPL($\uparrow$) \\ \hline

FAST-MATTN\cite{qi2020reverie}  & $28.20$   & $14.40 $   & $7.19$ \\
CKR\cite{gao2021room}& $31.44$  & $19.14 $  & $11.84$ \\
Airbert\cite{guhur2021airbert}  & $34.51$   & $27.89$   & $21.88$ \\
HOP\cite{qiao2022hop}   & $36.24$   & $31.78$  & $26.11 $\\
DUET\cite{chen2022think}  & $51.07$  & $46.98$   & $33.73$ \\
MiC\cite{qiao2023march} & $\mathbf{62.37}$  &  $\mathbf{56.97}$ & $\mathbf{43.60}$\\
\begin{tabular}[c]{@{}l@{}}Proposed localization+MiC\cite{qiao2023march}\end{tabular}  & $\mathbf{66.78}$ & $\mathbf{59.78}$ &  $\mathbf{48.66}$ \\  
\hline
 \end{tabular}
}
\vspace{-3mm}
\end{table}

\begin{figure}[!hbt]
\vspace{-1.2mm}
\begin{tikzpicture}

\node [input, name=input] {};         


\node [above right=-1.25cm and -13.9cm of input] (im1) {\includegraphics[width=0.36\columnwidth]{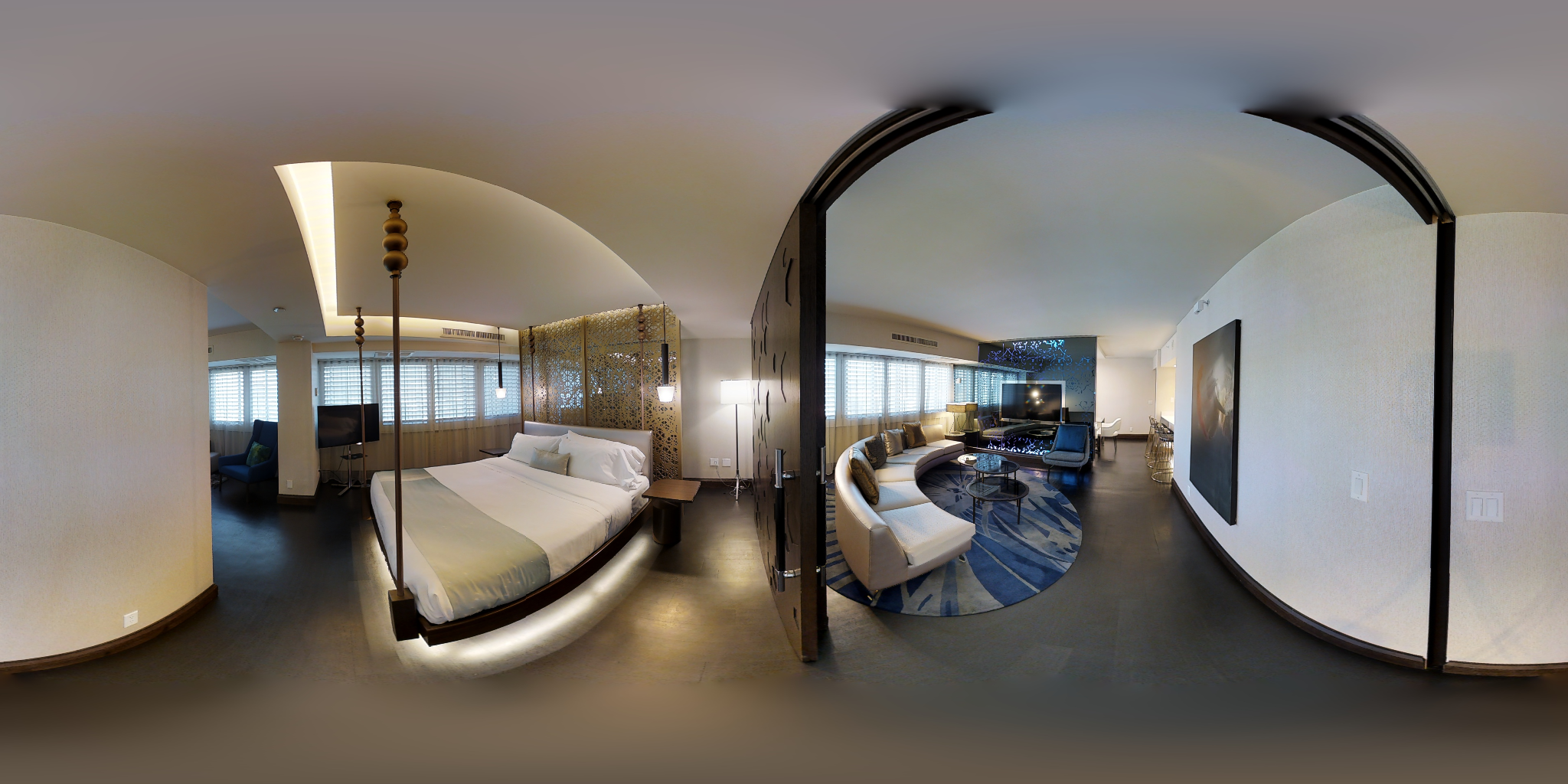} \includegraphics[width=0.36\columnwidth]{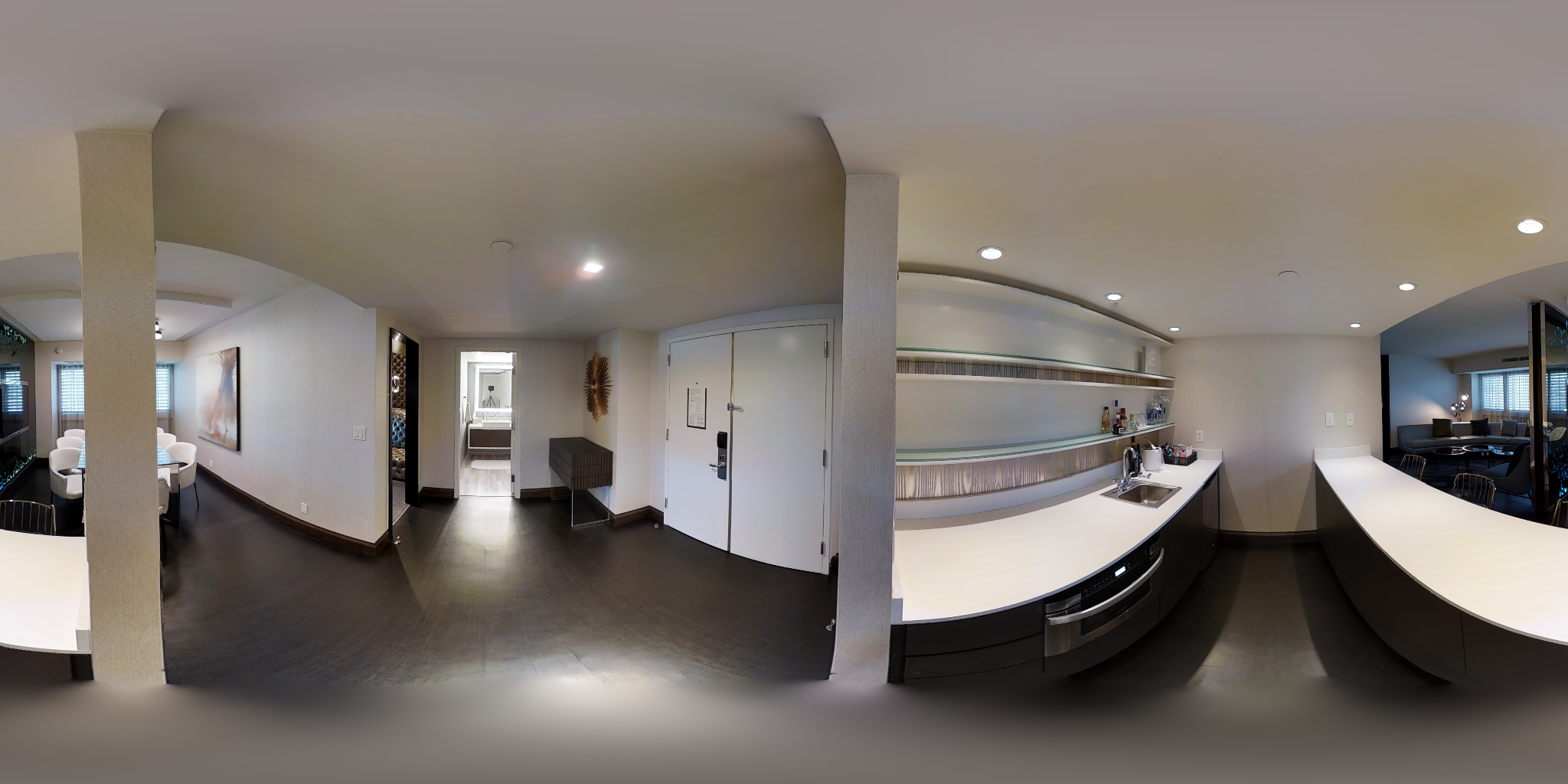} \quad \ };

\node[above right=-0.6cm and 0.26cm of im1.north west, font=\bfseries,blue] (lab1) {\textbf{bedroom \hspace{2.4cm} kitchen}};

\node [below of= im1,  node distance=1.1cm, align=center]  (rt) {\hspace{-1.7cm}\footnotesize{\textbf{room-type:} \hspace{0.2cm} bedroom \hspace{2.3cm} meetingroom} };

\node [below left=0.1cm and -2.3cm of rt, align=center]  (obt) {\footnotesize{\textbf{objects:} [`ceiling bedroom entry',}\\ \footnotesize{`floor room above', `ceiling inset}\\ \footnotesize{for fan']} };

\node [right of= obt,  node distance=4.35cm, align=center]  (obbtt) {\footnotesize{[`ceiling bedroom entry', `floor}\\ \footnotesize{room above', `ceiling inset}\\ \footnotesize{for fan']} };

\node [block, draw=black!50, inner sep=0pt, outer sep=0pt, fit = (rt) (obt) (obbtt)] (mic) {};

\node [below of= im1,  node distance=3.1cm, align=center]  (rt1) {\hspace{-2.1cm}\footnotesize{\textbf{room-type:} \hspace{0.2cm} bedroom \hspace{2.7cm} kitchen} };

\node [below left=0.1cm and -1.7cm of rt1, align=center]  (obt1) {\footnotesize{\textbf{objects:} [`table', `painting',}\\ \footnotesize{`bed', `tv', `door', `sofa',}\\ \footnotesize{`sofachair', `lamp', `tv', `door'] } };

\node [right of= obt1,  node distance=4.2cm, align=center]  (obt2) {\footnotesize{['kitchen counter', `kitchen counter', }\\ \footnotesize{`door', `table', `painting',}\\ \footnotesize{ `door', `chair', `chair'] } };

\node [block, draw=black!50, inner sep=0pt, outer sep=0pt, fit = (rt1) (obt1) (obt2) ] (pro) {};

\end{tikzpicture}
\vspace{-3.1mm}
\caption{An example showing the room-type prediction using MiC \cite{qiao2023march} and proposed localization method.  } 
\label{fig:navqua}
\vspace{-4.5mm}
\end{figure}

Figure \ref{fig:navqua} shows a visual comparison of the localization (room-type prediction) performed using the proposed and MiC \cite{qiao2023march} method for a few samples of $360^{\circ}$ scene. We observe that the MiC \cite{qiao2023march} is susceptible to failure for scenes composed of identical objects. For example, in MiC \cite{qiao2023march}, the bedroom and kitchen scenes are represented/characterized by similar objects, such as a ceiling inset for a fan, etc., leading to the inaccurate localization of the kitchen as a meeting room. In contrast, the proposed scene representation that employed a $360^{\circ}$ saliency graph effectively captures meaningful information about the scene based on salient objects present within each scene and provides a unique/distinctive representation for each scene. For example, the bedroom scene is represented by a bed, lamp, etc., whereas a kitchen counter characterizes the kitchen. Consequently, the proposed $360^{\circ}$ saliency graph-based localization results in improved indoor navigation. 


\begin{table}[!hbt]
    \centering
    \vspace{-2.5mm}
    \caption{Performance comparison of proposed navigation framework against MiC\cite{qiao2023march}. }
    \label{tab:navcompsota}
    \vspace{-0.24cm}
\begin{tabular}{c|ccc}
\hline
Methods & \multicolumn{3}{c}{Navigation Acc.} \\ \cline{2-3}
         & SR($\uparrow$)   &  SPL($\uparrow$) \\ \hline
MiC\cite{qiao2023march} &  ${56.97}$ & ${43.60}$\\ 
\begin{tabular}[c]{@{}l@{}}Proposed Navigation\end{tabular}  & $\mathbf{76.92}$ &  $\mathbf{72.85}$ \\ 
\hline
 \end{tabular}
\vspace{-3mm}
\end{table}

We also present a direct comparison between the proposed navigation framework and MiC \cite{qiao2023march} to assess overall performance. Table \ref{tab:navcompsota} shows the performance metrics for both methods. The results demonstrate that our approach, which leverages $360^{\circ}$ saliency graph-based localization and positioning, outperforms MiC \cite{qiao2023march}. This highlights the effectiveness of a proposed topological map, where each key location is represented by a $360^{\circ}$ saliency graph, providing a robust and informative map for the indoor environment.  

 
\vspace{-0.25cm}
\subsection{Ablation Study}
\vspace{-0.15cm}
We demonstrate the significance of components used in the proposed method by observing performance variation via the inclusion or exclusion of these components. 

\begin{table}[!hbt]
    \centering
    \vspace{-2.5mm}
    \caption{Impact of different components on localization. (USG: unweighted saliency graph)}
    \label{tab:mvcompabln}
    \vspace{-0.22cm}
    \renewcommand{\arraystretch}{0.95}
    \resizebox{\columnwidth}{!}{%
\begin{tabular}{c|ccccc}
\hline
Methods &  Image & Objects  & Salient objects & USG & $360^{\circ}$ saliency graph \\  \hline
\textit{Acc} ($\uparrow$)  & $0.68$ & $0.76$ &  $0.82$ & $0.84$ & $\mathbf{0.86}$\\
\hline
 \end{tabular}
}
\vspace{-3.5mm}
\end{table}
Table \ref{tab:mvcompabln} lists the \textit{Acc} obtained for localization using image features, objects, salient objects, unweighted saliency graph (USG), and $360^{\circ}$ saliency graph. For image feature extraction, we employ the ResNet-$101$ model pre-trained on ImageNet. Object-based localization is performed using the Jaccard similarity between the sets of detected object labels in the query and dataset images.
As indicated in the table, image feature-based localization achieves the lowest accuracy, as it neglects additional object attributes such as geometric features. Localization based on a collection of detected objects results in an improved performance compared to image feature-based localization. A further increase in performance metric is observed when using a collection of salient objects for localization, as this approach incorporates saliency scores to emphasize more informative objects. This highlights the significance of saliency in improving scene representation. However, while saliency helps prioritize relevant objects, it does not capture contextual relationships between them. Although an unweighted saliency graph yields further improvement by capturing structural context, it ignores edge significance. In contrast, the proposed method outperforms by using $360^{\circ}$ saliency graph that leverages salient objects, contextual information, edge weights through volumetric saliency scoring, and spherical orientation. This demonstrates the significance of the saliency and $360^{\circ}$ orientation used in this paper. 



\begin{table}[!hbt]
    \centering
    \vspace{-2.5mm}
    \caption{Impact of $360^0$ saliency graph-based positioning on navigation performance. }
    \label{tab:positnabl}
   \vspace{-0.22cm}
    \resizebox{0.94\columnwidth}{!}{%
\begin{tabular}{c|cc}
\hline
Method & without $360^0$ saliency graph & with $360^0$ saliency graph \\  \hline
SR & $58.73$ & $\mathbf{76.92}$\\
\hline
 \end{tabular}
}
\vspace{-3.5mm}
\end{table}

Furthermore, we assess the impact of $360^{\circ}$ saliency graph-based positioning on navigation performance in Tab. \ref{tab:positnabl}, using the SR as the evaluation metric. The table indicates that incorporating $360^{\circ}$ saliency graph-based positioning significantly increases the navigation performance by improving orientation estimation and localization. This improvement results from the $360^{\circ}$ saliency graph’s ability to effectively capture and integrate visual, contextual, semantic, and orientation information of salient objects, leading to a more informative and compact scene representation by removing redundant data (see supplementary for visual comparison).

 
\vspace{-0.27cm}
\subsection{Further Analysis}
\vspace{-0.15cm}
We also analyze the robustness of the proposed  $360^{\circ}$ saliency graph-based localization with respect to the camera parameter `Field Of View' (FOV).  A total of $82$, $360^{\circ}$ scenes are included for evaluation. Each $360^{\circ}$ scene is projected into a series of normal perspective scenes with FOV $60^0$, $120^0$, and $180^0$ using the toolbox in \cite{zhang2014panocontext}. The results for evaluation metrics are shown in Tab. \ref{tab:viewcomp}. The table shows that the lower FOV leads to poor performance metrics value due to challenges like identical scenes, non-scene-defining objects, and the absence of objects within the scene. In contrast, a larger FOV provides more detailed scene information, which leads to more accurate localization.

\begin{table}[h!]
\vspace{-2.5mm} 
\centering
\begin{minipage}{0.62\columnwidth}
\centering
\caption{Effect of FOV in localization performance.}
\label{tab:viewcomp}
\vspace{-0.25cm} 
\resizebox{0.99\columnwidth}{!}{%
\begin{tabular}{c|cccc}
\hline
FOV & $60^0$ & $120^0$ & $180^0$ & $360^{\circ}$  \\  \hline
\textit{Acc} & $0.37$ & $0.68$ & $0.79$ & $0.91$\\
Precision & $0.44$ & $0.68$ & $0.80$ & $0.94$ \\
Recall  & $0.63$ & $0.96$& $0.98$ & $0.96$ \\
\hline
 \end{tabular}
}
\end{minipage}%
\hspace{0.22cm} 
\begin{minipage}{0.34\columnwidth}
\centering
\caption{Localization on large-scale scenes.}
\label{tab: scale}
\vspace{-0.25cm} 
\resizebox{\columnwidth}{!}{%
\begin{tabular}{c|c}
\hline
\# scenes or rooms & \textit{Acc}\\ \hline
$6$ & $1.00$\\
$12$ & $1.00$\\
$42$ & $0.86$\\
$106$ & $0.82$ \\ 
\hline
\end{tabular}
}
\end{minipage}
\vspace{-3.6mm} 
\end{table}

We also assess the scalability of the proposed method to large-scale indoor scenes. Table \ref{tab: scale} presents the localization \textit{Accuracy} achieved using the proposed method for varying numbers of scenes per building. The proposed method achieved good \textit{Accuracy} for $106$ scenes, indicating that $360^{\circ}$ saliency graph-matching can scalably handle large buildings with multiple scenes/rooms.

\begin{table}[!hbt]
\vspace{-2.5mm}
    \centering
    \caption{Perturbation Analysis. }
    \label{tab:error}
     \vspace{-0.24cm}
    \resizebox{0.98\columnwidth}{!}{%
\begin{tabular}{l|p{1.6cm}|p{0.8cm}p{0.8cm}}
\hline
Method & Localiza-  & \multicolumn{2}{c}{Navigation Acc.} \\ \cline{3-4}
       &  tion Acc($\uparrow$)    &   SR($\uparrow$)   &  SPL($\uparrow$) \\ 
\hline
without perturbation &  $0.86$& $76.92$ & $72.85$\\
with spatial perturbation & $0.82$& $71.43$ & $63.68$\\
with orientation perturbation &  $0.74$& $68.75$ & $62.05$\\
with object perturbation ($50$\%) $P_1$  & $0.78$  & $69.71$ & $61.21$\\ 
with object perturbation ($66$\%) $P_2$  & $0.67$  & $40.28$ & $31.77$\\
\hline
 \end{tabular}
 }
\vspace{-2.8mm}
\end{table}

We further analyzed the effect of various perturbations, including spatial, orientation, and object variations, on the robustness of the proposed method.
Table \ref{tab:error} lists the localization and navigation performance metrics using the proposed method under different types of perturbation. We notice from the table that the proposed method is robust to small perturbations. The large amount of object perturbation results in the lack of correct scene information, leading to limited performance. 
 
\vspace{-0.18cm}
\section{Conclusion}
\label{sec:conclusion}
\vspace{-0.15cm}
This paper introduces a scene representation framework for indoor environments based on a topological map, where each key location is accompanied by an orientation-rich $360^{\circ}$ saliency graph generated from a panorama RGB-D image to facilitate navigation. The proposed $360^{\circ}$ saliency graph encodes rich visual, contextual, semantic, and geometric information by representing salient objects as nodes and their relations as edges. In addition, this paper presents a graph-matching–based localization and positioning method to demonstrate the applicability of the $360^{\circ}$ saliency graph representation in vision-based indoor navigation. Experimental results highlight the efficacy of the proposed approach by significantly improving localization performance and robustness compared to existing methods. Consequently, the enhanced localization yields an improved navigation performance. However, the method exhibits limited capability in dynamic environments, where rapid scene changes can degrade localization reliability and, subsequently, navigation performance.
In the future, we would like to address these limitations.

 

{
\bibliographystyle{IEEEtran}
\vspace{-0.25cm}
    \bibliography{main}

@inproceedings{koh2021pathdreamer,
  title={Pathdreamer: A world model for indoor navigation},
  author={Koh, Jing Yu and Lee, Honglak and Yang, Yinfei and Baldridge, Jason and Anderson, Peter},
  booktitle={ICCV},
  year={2021}
}

@inproceedings{qiao2023march,
  title={March in chat: Interactive prompting for remote embodied referring expression},
  author={Qiao, Yanyuan and Qi, Yuankai and Yu, Zheng and Liu, Jing and Wu, Qi},
  booktitle={ICCV},
  year={2023}
}

@inproceedings{wang2019reinforced,
  title={Reinforced cross-modal matching and self-supervised imitation learning for vision-language navigation},
  author={Wang, Xin and Huang, Qiuyuan and Celikyilmaz, Asli and Gao, Jianfeng and Shen, Dinghan and Wang, Yuan-Fang and Wang, William Yang and Zhang, Lei},
  booktitle={CVPR},
  year={2019}
}

@inproceedings{qiao2022hop,
  title={Hop: History-and-order aware pre-training for vision-and-language navigation},
  author={Qiao, Yanyuan and Qi, Yuankai and Hong, Yicong and Yu, Zheng and Wang, Peng and Wu, Qi},
  booktitle={CVPR},
  year={2022}
}

@inproceedings{guhur2021airbert,
  title={Airbert: In-domain pretraining for vision-and-language navigation},
  author={Guhur, Pierre-Louis and Tapaswi, Makarand and Chen, Shizhe and Laptev, Ivan and Schmid, Cordelia},
  booktitle={ICCV},
  year={2021}
}

@inproceedings{chen2022think,
  title={Think global, act local: Dual-scale graph transformer for vision-and-language navigation},
  author={Chen, Shizhe and Guhur, Pierre-Louis and Tapaswi, Makarand and Schmid, Cordelia and Laptev, Ivan},
  booktitle={CVPR},
  year={2022}
}

@inproceedings{radford2021learning,
  title={Learning transferable visual models from natural language supervision},
  author={Radford, Alec and Kim, Jong Wook and Hallacy, Chris and Ramesh, Aditya and Goh, Gabriel and Agarwal, Sandhini and Sastry, Girish and Askell, Amanda and Mishkin, Pamela and Clark, Jack and others},
  booktitle={ICML},
  year={2021},
  organization={PMLR}
}

@inproceedings{qi2020reverie,
  title={Reverie: Remote embodied visual referring expression in real indoor environments},
  author={Qi, Yuankai and Wu, Qi and Anderson, Peter and Wang, Xin and Wang, William Yang and Shen, Chunhua and Hengel, Anton van den},
  booktitle={CVPR},
  year={2020}
}

@article{xu2023free,
  title={Free-Viewpoint Navigation of Indoor Scene with 360° Field of View},
  author={Xu, Hang and Zhao, Qiang and Ma, Yike and Wang, Shuai and Yan, Chenggang and Dai, Feng},
  journal={Electronics},
  year={2023},
  publisher={MDPI}
}

@misc{kim2022habitatmap,
  author = {Nuri Kim},
  title = {Object2HabitatMap},
  howpublished = {\url{https://github.com/bareblackfoot/Object2HabitatMap}},
  year = {2022}
}

@inproceedings{kwon2024wayil,
  title={WayIL: Image-based Indoor Localization with Wayfinding Maps},
  author={Kwon, Obin and Jung, Dongki and Kim, Youngji and Ryu, Soohyun and Yeon, Suyong and Oh, Songhwai and Lee, Donghwan},
  booktitle={ICRA},
  year={2024},
  organization={IEEE}
}

@inproceedings{raj2016appearance,
  title={Appearance-based indoor navigation by IBVS using mutual information},
  author={Raj, Suman and Giordano, Paolo Robuffo and Chaumette, Fran{\c{c}}ois},
  booktitle={ICARCV},
  year={2016},
  organization={IEEE}
}

@article{an2024etpnav,
  title={Etpnav: Evolving topological planning for vision-language navigation in continuous environments},
  author={An, Dong and Wang, Hanqing and Wang, Wenguan and Wang, Zun and Huang, Yan and He, Keji and Wang, Liang},
  journal={PAMI},
  year={2024},
  publisher={IEEE}
}

@inproceedings{wang2024graph,
  title={Graph-Based Environment Representation for Vision-and-Language Navigation in Continuous Environments},
  author={Wang, Ting and Wu, Zongkai and Yao, Feiyu and Wang, Donglin},
  booktitle={ICASSP},
  year={2024},
  organization={IEEE}
}

@article{mehan2024questmaps,
  title={QueSTMaps: Queryable Semantic Topological Maps for 3D Scene Understanding},
  author={Mehan, Yash and Gupta, Kumaraditya and Jayanti, Rohit and Govil, Anirudh and Garg, Sourav and Krishna, Madhava},
  journal={arXiv preprint arXiv:2404.06442},
  year={2024}
}

@inproceedings{guerrero2020s,
  title={What’s in my room? object recognition on indoor panoramic images},
  author={Guerrero-Viu, Julia and Fernandez-Labrador, Clara and Demonceaux, C{\'e}dric and Guerrero, Jose J},
  booktitle={ICRA},
  year={2020},
  organization={IEEE}
}

@article{meena2024volumetric,
  title={A Volumetric Saliency Guided Image Summarization for RGB-D Indoor Scene Classification},
  author={Meena, Preeti and Kumar, Himanshu and Yadav, Sandeep},
  journal={TCSVT},
  year={2024},
  publisher={IEEE}
}

@inproceedings{meena2024indoor,
  title={An Indoor Scene Localization Method Using Graphical Summary of Multi-View RGB-D Images},
  author={Meena, Preeti and Kumar, Himanshu and Yadav, Sandeep},
  booktitle={ICIP},
  pages={3306--3312},
  year={2024},
  organization={IEEE}
}

@book{gonzalez2009digital,
  title={Digital image processing},
  author={Gonzalez, Rafael C},
  year={2009},
  publisher={Pearson education india}
}

@article{lee2021visibility,
  title={Visibility graph-based path-planning algorithm with quadtree representation},
  author={Lee, Wonhee and Choi, Gwang-Hyeok and Kim, Tae-wan},
  journal={Applied Ocean Research},
  volume={117},
  pages={102887},
  year={2021},
  publisher={Elsevier}
}

@article{oliva2020ganalyze,
  title={Ganalyze: Toward visual definitions of cognitive image properties},
  author={Oliva, Aude and Isola, Phillip and others},
  journal={Journal of Vision},
  volume={20},
  number={11},
  pages={297--297},
  year={2020},
  publisher={The Association for Research in Vision and Ophthalmology}
}

@article{qin2021semantic,
  title={Semantic loop closure detection based on graph matching in multi-objects scenes},
  author={Qin, Cao and Zhang, Yunzhou and Liu, Yingda and Lv, Guanghao},
  journal={Journal of Visual Communication and Image Representation},
  volume={76},
  pages={103072},
  year={2021},
  publisher={Elsevier}
}

@inproceedings{pramatarov2022boxgraph,
  title={BoxGraph: Semantic place recognition and pose estimation from 3D LiDAR},
  author={Pramatarov, Georgi and De Martini, Daniele and Gadd, Matthew and Newman, Paul},
  booktitle={IROS},
  pages={7004--7011},
  year={2022},
  organization={IEEE}
}

@inproceedings{zhang2014panocontext,
  title={Panocontext: A whole-room 3d context model for panoramic scene understanding},
  author={Zhang, Yinda and Song, Shuran and Tan, Ping and Xiao, Jianxiong},
  booktitle={ECCV},
  year={2014},
  organization={Springer}
}

@inproceedings{grover2016node2vec,
  title={node2vec: Scalable feature learning for networks},
  author={Grover, Aditya and Leskovec, Jure},
  booktitle={ACM SIGKDD-KDD},
  year={2016}
}

@article{chang2017matterport3d,
  title={Matterport3d: Learning from rgb-d data in indoor environments},
  author={Chang, Angel and Dai, Angela and Funkhouser, Thomas and Halber, Maciej and Niessner, Matthias and Savva, Manolis and Song, Shuran and Zeng, Andy and Zhang, Yinda},
  journal={arXiv preprint arXiv:1709.06158},
  year={2017}
}

@article{chen2019behavioral,
  title={A behavioral approach to visual navigation with graph localization networks},
  author={Chen, Kevin and De Vicente, Juan Pablo and Sepulveda, Gabriel and Xia, Fei and Soto, Alvaro and V{\'a}zquez, Marynel and Savarese, Silvio},
  journal={arXiv preprint arXiv:1903.00445},
  year={2019}
}

@inproceedings{chaplot2020neural,
  title={Neural topological slam for visual navigation},
  author={Chaplot, Devendra Singh and Salakhutdinov, Ruslan and Gupta, Abhinav and Gupta, Saurabh},
  booktitle={CVPR},
  year={2020}
}

@inproceedings{du2020learning,
  title={Learning object relation graph and tentative policy for visual navigation},
  author={Du, Heming and Yu, Xin and Zheng, Liang},
  booktitle={ECCV},
  year={2020},
  organization={Springer}
}

@inproceedings{gao2021room,
  title={Room-and-object aware knowledge reasoning for remote embodied referring expression},
  author={Gao, Chen and Chen, Jinyu and Liu, Si and Wang, Luting and Zhang, Qiong and Wu, Qi},
  booktitle={CVPR},
  year={2021}
}

@inproceedings{xu2024robot,
  title={Robot Navigation in Unseen Environments using Coarse Maps},
  author={Xu, Chengguang and Amato, Christopher and Wong, Lawson LS},
  booktitle={ICRA},
  year={2024},
  organization={IEEE}
}

@article{zhou2022relational,
  title={Relational attention-based Markov logic network for visual navigation},
  author={Zhou, Kang and Guo, Chi and Zhang, Huyin},
  journal={The Journal of Supercomputing},
  volume={78},
  number={7},
  year={2022},
  publisher={Springer}
}

@article{niu2019hand,
  title={A hand-drawn map-based navigation method for mobile robots using objectness measure},
  author={Niu, Jie and Qian, Kun},
  journal={IJARS},
  volume={16},
  number={3},
  year={2019},
  publisher={SAGE Publications Sage UK: London, England}
}

@inproceedings{moravec1985high,
  title={High resolution maps from wide angle sonar},
  author={Moravec, Hans and Elfes, Alberto},
  booktitle={ICRA},
  volume={2},
  year={1985},
  organization={IEEE}
}

@inproceedings{konolige2011navigation,
  title={Navigation in hybrid metric-topological maps},
  author={Konolige, Kurt and Marder-Eppstein, Eitan and Marthi, Bhaskara},
  booktitle={ICRA},
  year={2011},
  organization={IEEE}
}

@inproceedings{detone2018superpoint,
  title={Superpoint: Self-supervised interest point detection and description},
  author={DeTone, Daniel and Malisiewicz, Tomasz and Rabinovich, Andrew},
  booktitle={CVPR},
  year={2018}
}

@article{zhao2022alike,
  title={Alike: Accurate and lightweight keypoint detection and descriptor extraction},
  author={Zhao, Xiaoming and Wu, Xingming and Miao, Jinyu and Chen, Weihai and Chen, Peter CY and Li, Zhengguo},
  journal={TMM},
  volume={25},
  year={2022},
  publisher={IEEE}
}

@inproceedings{luo2020aslfeat,
  title={Aslfeat: Learning local features of accurate shape and localization},
  author={Luo, Zixin and Zhou, Lei and Bai, Xuyang and Chen, Hongkai and Zhang, Jiahui and Yao, Yao and Li, Shiwei and Fang, Tian and Quan, Long},
  booktitle={CVPR},
  year={2020}
}

@inproceedings{jiang2021cotr,
  title={Cotr: Correspondence transformer for matching across images},
  author={Jiang, Wei and Trulls, Eduard and Hosang, Jan and Tagliasacchi, Andrea and Yi, Kwang Moo},
  booktitle={ICCV},
  year={2021}
}

@inproceedings{sarlin2020superglue,
  title={Superglue: Learning feature matching with graph neural networks},
  author={Sarlin, Paul-Edouard and DeTone, Daniel and Malisiewicz, Tomasz and Rabinovich, Andrew},
  booktitle={CVPR},
  year={2020}
}

@inproceedings{chen2022aspanformer,
  title={Aspanformer: Detector-free image matching with adaptive span transformer},
  author={Chen, Hongkai and Luo, Zixin and Zhou, Lei and Tian, Yurun and Zhen, Mingmin and Fang, Tian and Mckinnon, David and Tsin, Yanghai and Quan, Long},
  booktitle={ECCV},
  year={2022},
  organization={Springer}
}

@inproceedings{sun2021loftr,
  title={LoFTR: Detector-free local feature matching with transformers},
  author={Sun, Jiaming and Shen, Zehong and Wang, Yuang and Bao, Hujun and Zhou, Xiaowei},
  booktitle={CVPR},
  year={2021}
}

@article{tang2022quadtree,
  title={Quadtree attention for vision transformers},
  author={Tang, Shitao and Zhang, Jiahui and Zhu, Siyu and Tan, Ping},
  journal={arXiv preprint arXiv:2201.02767},
  year={2022}
}

@article{taioli2024mind,
  title={Mind the error! detection and localization of instruction errors in vision-and-language navigation},
  author={Taioli, Francesco and Rosa, Stefano and Castellini, Alberto and Natale, Lorenzo and Del Bue, Alessio and Farinelli, Alessandro and Cristani, Marco and Wang, Yiming},
  journal={arXiv preprint arXiv:2403.10700},
  year={2024}
}

@inproceedings{kim2023topological,
  title={Topological semantic graph memory for image-goal navigation},
  author={Kim, Nuri and Kwon, Obin and Yoo, Hwiyeon and Choi, Yunho and Park, Jeongho and Oh, Songhwai},
  booktitle={CoRL},
  year={2023},
  organization={PMLR}
}

@inproceedings{liu2024volumetric,
  title={Volumetric Environment Representation for Vision-Language Navigation},
  author={Liu, Rui and Wang, Wenguan and Yang, Yi},
  booktitle={CVPR},
  year={2024}
}

@article{zhang2023searching,
  title={Searching from area to point: A hierarchical framework for semantic-geometric combined feature matching},
  author={Zhang, Yesheng and Zhao, Xu and Qian, Dahong},
  journal={arXiv preprint arXiv:2305.00194},
  year={2023}
}

@article{triana2018implementation,
  title={Implementation floyd-warshall algorithm for the shortest path of garage},
  author={Triana, Yaya Sudarya and Syahputri, Indah},
  journal={IJISRT},
  volume={3},
  number={2},
  year={2018}
}

@article{labrosse2007short,
  title={Short and long-range visual navigation using warped panoramic images},
  author={Labrosse, Fr{\'e}d{\'e}ric},
  journal={Robotics and Autonomous Systems},
  year={2007},
  publisher={Elsevier}
}

@article{yu2011image,
  title={Image-based homing navigation with landmark arrangement matching},
  author={Yu, Seung-Eun and Kim, DaeEun},
  journal={Information sciences},
  volume={181},
  number={16},
  year={2011},
  publisher={Elsevier}
}

@inproceedings{cha2012omni,
  title={Omni-directional image matching for homing navigation based on optical flow algorithm},
  author={Cha, Youngseo and Kim, DaeEun},
  booktitle={ICCAS},
  year={2012},
  organization={IEEE}
}

@article{lee2018visual,
  title={Visual homing navigation with Haar-like features in the snapshot},
  author={Lee, Changmin and Kim, Daeeun},
  journal={IEEE Access},
  volume={6},
  year={2018},
  publisher={IEEE}
}

@inproceedings{kim2024fully,
  title={Fully Geometric Panoramic Localization},
  author={Kim, Junho and Jeong, Jiwon and Kim, Young Min},
  booktitle={CVPR},
  year={2024}
}

@inproceedings{kim2023ldl,
  title={Ldl: Line distance functions for panoramic localization},
  author={Kim, Junho and Choi, Changwoon and Jang, Hojun and Kim, Young Min},
  booktitle={ICCV},
  year={2023}
}

@inproceedings{liu2024multiple,
  title={Multiple Visual Features in Topological Map for Vision-and-Language Navigation},
  author={Liu, Ruonan and Kong, Ping and Zhang, Weidong},
  booktitle={IROS},
  year={2024},
  organization={IEEE}
}

@inproceedings{krantz2021waypoint,
  title={Waypoint models for instruction-guided navigation in continuous environments},
  author={Krantz, Jacob and Gokaslan, Aaron and Batra, Dhruv and Lee, Stefan and Maksymets, Oleksandr},
  booktitle={ICCV},
  pages={15162--15171},
  year={2021}
}

@inproceedings{he2016deep,
  title={Deep residual learning for image recognition},
  author={He, Kaiming and Zhang, Xiangyu and Ren, Shaoqing and Sun, Jian},
  booktitle={CVPR},
  pages={770--778},
  year={2016}
}

@inproceedings{hong2023learning,
  title={Learning navigational visual representations with semantic map supervision},
  author={Hong, Yicong and Zhou, Yang and Zhang, Ruiyi and Dernoncourt, Franck and Bui, Trung and Gould, Stephen and Tan, Hao},
  booktitle={ICCV},
  pages={3055--3067},
  year={2023}
}

@inproceedings{li2022envedit,
  title={Envedit: Environment editing for vision-and-language navigation},
  author={Li, Jialu and Tan, Hao and Bansal, Mohit},
  booktitle={CVPR},
  pages={15407--15417},
  year={2022}
}

@inproceedings{khandelwal2022simple,
  title={Simple but effective: Clip embeddings for embodied ai},
  author={Khandelwal, Apoorv and Weihs, Luca and Mottaghi, Roozbeh and Kembhavi, Aniruddha},
  booktitle={CVPR},
  year={2022}
}

@inproceedings{du2021curious,
  title={Curious representation learning for embodied intelligence. IEEE},
  author={Du, Yilun and Gan, Chuang and Isola, Phillip},
  booktitle={ICCV},
  pages={10388--10397},
  year={2021}
}

@inproceedings{xie2024mfp,
  title={MFP-CNN: Multi-Scale Fusion and Pooling Network for Accurate Scene Classification},
  author={Xie, Wanjing and Liu, Tianjie},
  booktitle={ICCVIT},
  year={2024},
  organization={IEEE}
}
}



\end{document}